\algrenewcommand\algorithmiccomment[1]{\hfill\(\triangleright\) #1}
\definecolor{sslblue}{RGB}{3,121,219} % match Second Street Labs logo blue
\titleformat{\section}{\large\bfseries\color{sslblue}}{\thesection}{1em}{}
\titleformat{\subsection}{\normalsize\bfseries\color{sslblue}}{\thesubsection}{0.75em}{}
\newtheoremstyle{ssltheorem}
  {1em}{1em}{\itshape}{}{\color{sslblue}\bfseries}{.}{0.5em}{}
\theoremstyle{ssltheorem}
\newtheorem{theorem}{Theorem}
\newtheorem{lemma}{Lemma}
\newtheorem{assumption}{Assumption}
\newtheorem{definition}{Definition}
\newtheorem{corollary}{Corollary}
\newcommand{\institutionlogo}{
    \includegraphics[height=3cm]{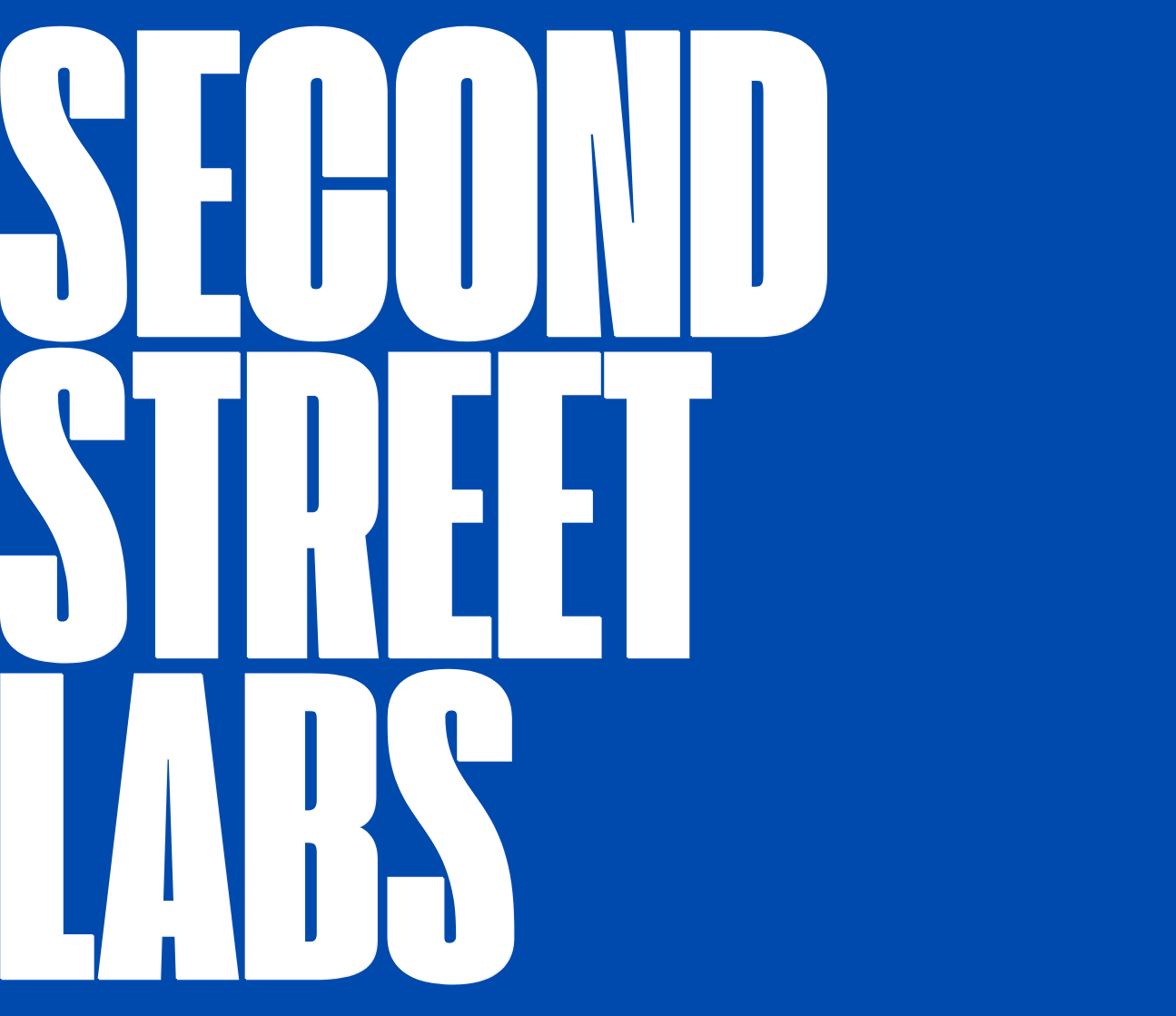}
}
\title{
    \institutionlogo\\[1cm]
    {\bfseries\color{sslblue} Mo' Memory, Mo' Problems: Stream-Native Machine Unlearning}
}
\author[1, 2]{Kennon Stewart}
\affil[1]{Second Street Labs;
Detroit, MI}
\affil[2]{University of Michigan; Ann Arbor, MI}
\affil[ ]{\texttt{kennon@secondstreetlabs.io}}
\date{\today}
\begin{document}
\maketitle

\begin{abstract}
Machine unlearning work assumes a static, i.i.d training environment that doesn't truly exist.
Modern ML pipelines need to learn, unlearn, and predict continuously on production streams of data.
We translate batch unlearning to the online setting using notions of regret, sample complexity, and deletion capacity.
We tighten regret bounds to a logarithmic $\mathcal{O}(\ln{T})$, a first for a machine unlearning algorithm.
When fitted with an online variant of L-BFGS optimization, the algorithm achieves state of the art regret with a constant memory footprint.
Such changes also extend the lifespan of an ML model before expensive retraining, making for a more efficient unlearning process.
\end{abstract}

\section{Introduction}
\label{sec:intro}

Industry has transitioned to a serverless and event-based cloud architecture for ML software.
Devices emit shards of data which are then streamed to a data center where models are built and inference is performed.
But the assumption of a cloud-based model is not guaranteed, and models are often embedded to the device itself to perform inference on the stream of data.
This brings statistical modeling directly to the device, but the question remains: how much data does the model need collect before performing valid inference?
The problem is made more interesting by recent legislation. The European Union's General Data Protection Regulation provides users the right to be forgotten: to have their data summarily deleted from a company's servers, and the effect of such data removed from a statistical model.
Statistical models need not only to learn on a stream of data but to also unlearn just as easily and without loss of accuracy.
We build specifically for the case of distributed systems where learning and unlearning are \textbf{necessary, interleaved} operations used to refine model accuracy.
We show that the memory pair meets state-of-the-art regret guarantees with a sublinear memory footprint particularly suited for online learning.
It also satisfies $(\varepsilon,\delta)$-unlearning requirements, enabling compliance with legislation like GDPR.
Unlearning quality is measured by a question: how closely does my unlearned model approximate the retrained ideal?
The frameworks of Sekhari et al. and Qiao et al.
provide provable $(\varepsilon,\delta)$ guarantees that a post-hoc deletion yields (in distribution) the same parameters as a fresh retrain \cite{Sekhari_Acharya_Kamath_Suresh_2021-03}.
However, both assume an offline model that never learns again after deletion.
We assume a stream environment where deletions and insertions are events processed in sequence.
\section{Stream-Native Learning}
\label{sec:stream-native}

The Memory Pair is an ordered pair of algorithms satisfying the following requirements.
We provide an example of such a pair of algorithms in our theoretical analysis.
\begin{itemize}
    \item \textit{Stream-native learning:} A learner $A$ ingests each example $(x_{t},y_{t})$ and performs an $O(d)$ update in micro-seconds, maintaining only lightweight sketches instead of per-sample gradients.
\item \textit{Deferred inference gate:} Predictions are withheld until the running sample count exceeds a theoretically derived sample-complexity threshold, ensuring the first answer is already PAC-competitive.
\item \textit{Symmetric unlearning:} A paired algorithm $\bar{A}$ accepts deletion requests via the same API and issues a one-step \textit{negative} update that preserves the learner’s regret bound.
The model is guaranteed to be within an arbitrary precision of the ideal cold-start model trained from scratch.
\item \textit{Live deletion-capacity odometer:} Each unlearning step depletes the model's deletion capacity budget.
When a model unlearns its capacity, the learner is flagged for retraining to preserve accuracy.
\end{itemize}
\subsection{Contributions}
\label{subsec:contributions}

We contribute a complete \emph{stream-native} learning-unlearning pipeline and its theoretical analysis.
Our contributions fall into two complementary categories: \emph{accuracy guarantees} and \emph{indistinguishability guarantees}.
\begin{enumerate}
    \item \textbf{Memory Pair framework.} We introduce the first online algorithm that couples a learner $A$ and an unlearner $\overline{A}$ into a unified "memory pair."
This pairing allows the system to process interleaved \textsc{insert}, \textsc{delete}, and \textsc{predict} operations on a live stream while storing only $\mathcal{O}(d\tau)$ curvature pairs.
\item \textbf{Accuracy guarantees.} Logarithmic cumulative regret guarantees are standard for online convex optimization problems.
We use the strong convexity assumption to achieve logarithmic bounds, a first for a certified unlearning algorithm:
    $$
    \mathcal{O}\!\left(\tfrac{G^{2}}{\lambda}\ln T\right).
$$ 

    \item \textbf{Capacity and complexity guarantees.} Using an AdaGrad-style statistic $S_{t}$, we derive closed-form formulas for $\gamma$-\emph{deletion capacity} and $\gamma$-\emph{sample complexity}.
These bounds tighten to $\widetilde{\mathcal{O}}(\sqrt{T})$ in the worst case and adapt automatically on benign data streams where gradients decay.
\item \textbf{Fidelity guarantees.} For every valid \textsc{delete} event, the unlearner provides $(\varepsilon,\delta)$-certified fidelity to the ideal retrain.
We embed these guarantees into a live \emph{capacity odometer}, which ensures the system halts and retrains precisely when further deletions would violate either fidelity or regret bounds.
\end{enumerate}

\paragraph{Design intuition.}  
Treating learning and unlearning as separate modules leads to duplicated state, inconsistent updates, and costly synchronization.
By pairing the learner and unlearner into a single \emph{memory pair}, both algorithms operate on a shared representation of curvature information.
This symmetry ensures that every \textsc{insert} update provides the gradient and curvature data later needed for a \textsc{delete}, while every \textsc{delete} respects the regret structure established during learning.
Our goal is for learning and unlearning to complement rather than interfere with one another, yielding tighter theoretical guarantees and near-instantaneous deletion updates in practice.
\section{Preliminaries}
\label{sec:prelim}

The batch unlearning framework from Sekhari et al. provides the theoretical inspiration for true online unlearning \cite{Sekhari_Acharya_Kamath_Suresh_2021-03}.
By focusing on population risk, they establish that unlearning with generalization guarantees is possible.
However, their algorithm is designed for a static, i.i.d. world.
Our Memory Pair framework adapts these core ideas for a dynamic, online setting. We replace the assumption of i.i.d.
data with a nonstationary stream, substitute the excess risk metric with the more robust notion of cumulative regret, and swap the expensive, offline Hessian-based update with a lightweight, stream-native L-BFGS approximation\cite{mokhtari2015online}.
The Memory Pair is not just a generalization of the batch algorithm but a \textit{necessary evolution} designed to handle the continuous learning, forgetting, and inference demands of a live data stream.
We start by framing our learners within the online learning paradigm.
Batch learning  minimizes the loss function evaluated over a static training set.
We instead use regret, which appropriately compares the online learner to the a comparator model for a particular realized sequence of events \cite{Cesa-Bianchi_Lugosi_2006}.
This demonstrates the key factor of the Sekhari paper. By requiring the model to satisfy population risk guarantees, the model is expected to perform well against some true but unknown parameter distribution.
This is very similar to the case of online learning because the model's evaluation is continuous, and the descent is adjusted accordingly.
\section{Related Work}

Our work lies at the intersection of (i) online quasi-Newton optimization—particularly limited-memory BFGS (L-BFGS)—and (ii) \emph{certified} machine unlearning.
On the optimization side, we adopt standard spectral regularity assumptions on the preconditioner maintained by online L-BFGS and time-decaying stepsizes under strong convexity.
On the unlearning side, we target distributional indistinguishability from retraining while operating in a streaming setting with interleaved insertions and deletions.
Our contribution is to \emph{pair} an online L-BFGS learner with a symmetric unlearner and to analyze regret \emph{including} deletion-induced noise.
\subsection{Online (Quasi-)Newton Methods and L-BFGS}
Stochastic and online quasi-Newton methods date back at least to \cite{schraudolph2007stochastic}, who introduced stochastic BFGS/L-BFGS updates for online convex optimization.
Robust, scalable stochastic quasi-Newton (SQN) variants were developed by \cite{byrd2016stochastic}, emphasizing curvature subsampling and practical damping.
For strongly convex smooth objectives, \cite{moritz2016linearly} obtained linear convergence rates for a stochastic L-BFGS scheme.
Closer to our assumptions, \cite{mokhtari2015online} established global convergence of \emph{online} L-BFGS under uniform eigenvalue bounds on the inverse-curvature matrices, which motivates our $(c,C)$ spectrum constraints and limited-memory window.
\subsection{Machine Unlearning and Certification}
A practical systems perspective on data deletion emerged with SISA training \cite{bourtoule2021sisa}, which shards and isolates training to reduce retraining cost after deletions;
our approach differs in being stream-native with paired insert/delete updates.
Formal treatments of efficient deletion began with \cite{ginart2019making2}, while \cite{guo2020certified} defined \emph{certified data removal}—indistinguishability from retraining—which we adopt as our target guarantee.
For long adversarial sequences of updates, \cite{neel2021descent} provided gradient-based unlearning with steady-state error and per-deletion runtime bounds;
\cite{gupta2021adaptive} handled adaptivity using tools from differential privacy and max-information.
Recent second-order directions include Hessian-free \emph{online} certified unlearning \cite{qiao2024hessianfree}, which avoids explicit Hessians and maintains per-sample statistics;
we instead use an L-BFGS memory with paired symmetric updates and provide regret bounds that explicitly include deletion noise.
Complementing algorithmic work, \cite{vanwaerebeke2025when} analyze when unlearning beats full retraining, identifying regimes via a phase diagram;
these complexity trade-offs motivate our capacity/accounting view. Finally, emerging efforts are extending certification beyond convex models to deep networks \cite{zhang2024towards}.
\paragraph{Our contribution relative to prior art.}
Unlike SISA-style retrain-on-shards methods, our \emph{memory-pair} pipeline performs local, symmetric quasi-Newton updates for both learning and unlearning within a single stream-native algorithm.
Compared to prior certified unlearning mechanisms—often tailored to ERM solutions or batch settings—we analyze \emph{online} regret under strong convexity and add an explicit deletion term to the bound, reflecting calibrated Gaussian perturbations used to certify removals.
On the optimization side, we adopt classical online L-BFGS regularity but combine it with a deletion-capacity accountant that governs when inference is deferred and how many deletions can be admitted before retraining becomes necessary.
\begin{definition}[Online learner with vanishing average regret]
\label{def:online-learner}
Let $\mathcal W \subseteq \mathbb{R}^d$ be a convex hypothesis set and $\{\ell_{t}\}_{t=1}^{\infty}$ an arbitrary sequence of loss functions $\ell_{t} : \mathcal {W} \to \mathbb{R}_{\ge 0}$.
An algorithm $\mathcal{A}$ that, after observing an outcome $(x_{t},y_{t})$, outputs a weight vector $w_{t} = \mathcal A(x_{1:t},y_{1:t}) \in \mathcal{W}$ is called an \emph{online learner} iff for every outcome sequence $\{(x_{t},y_{t})\}_{t=1}^{\infty}$ the average regret
$$
  \frac{1}{n}\,R_{n}(\mathcal{A})
  \;:=\;
\frac{1}{n}
  \Bigl(
    \sum_{t=1}^{n}\ell_{t}(w_{t})
    -
    \min_{w\in\mathcal W}\sum_{t=1}^{n}\ell_{t}(w)
  \Bigr)
$$
converges to zero,
i.e.
$$
  \frac{1}{n}\,R_{n}(\mathcal{A})
  \;\xrightarrow[n\to\infty]{}\;0.
$$
\end{definition}

In the online setting, a model is expected not only to learn incrementally, but also process data deletion requests with some measure of guarantee.
The algorithm must yield an unlearned model that is statistically indistinguishable from the retrained ideal.
This tight, path-dependent relationship requires a unified framework where learning and unlearning procedures are tightly coupled.
The world is rarely stationary. Learning new distributions under concept drift is essential to maintaining regret guarantees.
We formalize the notion of a memory pair as a coupled algorithm designed to operate on a stream.
\begin{definition}[Memory Pair]
\label{def:memory-pair}

Let $\{E_{t}\}_{t=1}^{N}$ be an event stream where $E_{t}\in \{\texttt{insert}(x_{t},y_{t}),\texttt{delete}(u_{t})\}$.
A pair of algorithms $(A,\bar{A})$ with shared state $\theta_{t-1}$ acts as
$$
\text{learn step: }\theta_{t} = A(\theta_{t-1},E_{t}),\qquad
\text{unlearn step: }\bar\theta_{t} = \bar{A}(\theta_{t-1},E_{t}).
$$

Denote by $\tilde{\theta_{t}}$ the {\it ideal replay model}, i.e.\ the ideal model retrained from scratch without the offending data.
Fix an regret target $\gamma$, confidence $\delta$, fidelity budget
$(\varepsilon^{*},\delta^{*})$, and a deletion capacity
$m$.
Then $(A,\bar{A})$ is a
\emph{$(\gamma,m,\varepsilon^{*},\delta^{*})$-memory pair} if, for every
event stream and for every horizon $N$, the following hold:

\begin{enumerate}
\item \textbf{Fidelity.} The strength of the guarantees depends entirely on the context.
Stronger guarantees will require higher sample complexities and lower deletion capacities.
      $$
      \Pr\!\bigl[\tfrac{1}{N} R_{N}(\theta)\le\gamma\bigr]\;\ge\;1-\delta.
$$

\item \textbf{$(\varepsilon, \delta)$-Certified Unlearning.}  
    Let $\mathcal{M}_{\text{del}}$ denote the (randomized) model state produced after applying a delete operation,
    and let $\mathcal{M}_{\text{retrain}}$ denote the model state produced by retraining from scratch on the dataset with
    the deleted point removed.
We say the algorithm achieves $(\varepsilon,\delta)$\emph{-certified unlearning}
    if for every measurable set $S$,
    \[
    \Pr[\mathcal{M}_{\text{del}}\in S] \;\le\;
e^{\varepsilon}\Pr[\mathcal{M}_{\text{retrain}}\in S] + \delta.
    \]
    By post-processing invariance, the same guarantee holds for any function of the model state (e.g., predictions).
\item \textbf{m-deletion capacity.}  
      Let $D_{N}$ be the number of \textsc{delete} events up to $N$.
If $D_{N}\le m$, then Conditions 1 and 2 hold.
\end{enumerate}
\end{definition}

The algorithm combines the regret guarantees of online learning with the DP-style language that are ubiquitous in machine unlearning.
Bundling the learning and unlearning algorithms is a natural extension to a body of work that has increasingly found them inseparable.
Indeed, recent Hessian-free methods of machine unlearning explicitly use the learning process to gain gradient information that is later used to unlearn \cite{qiao2024hessianfree}.
\section{Memory Pair}
\label{sec:memory-pair}

\subsection{Newton-Step Approach to Machine Unlearning}

Machine unlearning has a rich tool set of algorithms to approximate the ideal retrained model.
The Newton Step optimization method is used to remove the influence of a set of unlearned points.
With careful noise calibration, the unlearned method is certifiably indistinguishable from the ideal cold-start retrained model \cite{Sekhari_Acharya_Kamath_Suresh_2021-03}.
The unlearning approximation draws its strength from the combined use of first and second order information.
The gradient for every point is stored, and the initial influence of the point is scaled by the curvature of the loss.
The state of the model is carefully "logged" at each point in order to efficiently approximate the retrained ideal.
There has also been recent work in Hessian-Free unlearning, which stores curvature information of the training data to create a suitable estimate of the inverse Hessian using Hessian Vector Products.
But with a memory footprint that scales linearly, it is not feasible for the online use case where $n>>d$.
\subsection{Quasi-Newton Approaches to Online Learning}

We specifically target the Hessian inversion operation of the Newton-Step update for our unlearning algorithm.
We replace the Hessian inversion with a quasi-Newton L-BFGS optimization algorithm, eliminating the need for the precompute.
In fact, it enables a broader prediction space entirely. With no precompute required for deletion, interleaved learning and unlearning operations can be processed sequentially as the events are read from a data stream.
This is especially attractive for the prospect of federated learning in memory-constrained environments. More on this in Future Work.
We efficiently unlearn the contaminated influence near-instantaneously by computing the current gradient with respect to the unlearned point.
The L-BFGS optimization stores a constant number of curvature pairs that are used to estimate the second-order information of the loss surface.
When the surface is well-behaved, meaning bounded Hessian eigenvalues, then the L-BFGS approximation has proven convergence with a constant storage requirement.
% ---- Shared primitives (pseudocode names used in both algorithms) ----
% grad(theta; x,y)                   -- per-sample gradient
% lbfgs.direction(g)                 -- quasi-Newton direction (uses stored pairs)
% signed_update(theta, sign, d, alpha)  -- theta + sign * alpha * d  (sign ∈ {+1,-1})
% calibrate_noise(sensitivity, eps, delta) -- σ from Lemma / DP-style tail bound
% odometer.spend(eps, delta)         -- update budgets (may be 
no-op for insertion)

\begin{algorithm}
\caption{Memory Pair Step ($\mathsf{op}\in\{\textsc{insert},\textsc{delete}\}$)}
\label{alg:pair-step}
\begin{algorithmic}
\Require Point $(x,y)$, model state $(\theta,\mathrm{lbfgs})$, step size $\alpha$, budgets $(\varepsilon_s,\delta_s)$, op $\in\{\textsc{insert},\textsc{delete}\}$
\State \textbf{assert} $(\mathsf{op}=\textsc{insert})$ \textbf{or} (deletions$<K$ \textbf{and} len(lbfgs)$>0$)
\State $g_{\text{pre}} \gets \nabla_{\theta}\mathcal{L}(\theta; x,y)$ \Comment{common: gradient at current params}
\State $d \gets \mathrm{lbfgs.direction}(g_{\text{pre}})$ \Comment{common: quasi-Newton direction}
\State $\sigma_{\text{op}} \gets \mathbf{1}[\mathsf{op}=\textsc{delete}] \cdot \mathrm{calibrate\_noise}(\|d\|_2,\varepsilon_s,\delta_s)$
\State $\eta \sim \mathcal{N}(0,\sigma_{\text{op}}^{2} I)$ \Comment{no noise for insertion unless DP-training}
\State $\mathrm{sign} \gets \begin{cases} +1 & \mathsf{op}=\textsc{insert} \\ -1 & \mathsf{op}=\textsc{delete}\end{cases}$
\State $\theta_{\text{tmp}} \gets \mathrm{signed\_update}(\theta,\mathrm{sign},d,\alpha)$ \Comment{$\theta \pm \alpha d$}
\State $\theta \gets \theta_{\text{tmp}} + \eta$ \Comment{common: optional post-update noise}
\State $g_{\text{post}} \gets \nabla_{\theta}\mathcal{L}(\theta;
x,y)$ \Comment{common: gradient after update}
\State $s \gets \theta - \theta_{\text{tmp}} + (\theta_{\text{tmp}} - (\theta \mp \alpha d))$ \Comment{= actual $\Delta\theta$ realized}
\State $y_{\text{vec}} \gets g_{\text{post}} - g_{\text{pre}}$ \Comment{common: curvature pair $y$}
\If{$\mathsf{op}=\textsc{insert}$}
  \State $\mathrm{lbfgs.add\_pair}(s, y_{\text{vec}})$ \Comment{standard L-BFGS update}
\Else
  \State \textit{(optional)} $\mathrm{lbfgs.maintain}(s, y_{\text{vec}},\text{mode}=\text{'downdate'})$ \Comment{keep memory stable}
  \State $\mathrm{odometer.spend}(\varepsilon_s,\delta_s)$;
deletions$\,{+}{=}\,1$
\EndIf
\end{algorithmic}
\end{algorithm}

\textbf{Symmetry of the Memory Pair.} Both operations share an identical control flow: compute a local gradient, obtain a quasi-Newton direction from the shared L-BFGS memory, apply a signed parameter update ($+\alpha d$ for insertion, $-\alpha d$ for deletion), optionally inject calibrated Gaussian noise, and then update curvature statistics using the realized $(s,y)$ pair.
The only semantic differences are (i) the sign of the step and (ii) budget accounting: deletion spends $(\varepsilon_s,\delta_s)$ in the odometer, whereas insertion does not.
This symmetry makes the unlearner a first-class mirror of the learner, ensuring the same curvature information powers both updates and simplifying both the analysis and the implementation.
\begin{algorithm}
\caption{Memory Pair \textsc{Insert}}
\label{alg:insertion-symmetric}
\begin{algorithmic}
\Require $(x,y),(\theta,\mathrm{lbfgs}),\alpha$
\State \textbf{call} \ref{alg:pair-step} with $\mathsf{op}=\textsc{insert}$ and $(\varepsilon_s,\delta_s)=(0,0)$
\end{algorithmic}
\end{algorithm}

\begin{algorithm}
\caption{Memory Pair \textsc{Delete}}
\label{alg:deletion-symmetric}
\begin{algorithmic}
\Require $(x,y),(\theta,\mathrm{lbfgs}),\alpha$, budgets $(\varepsilon_s,\delta_s)$
\State \textbf{call} \ref{alg:pair-step} with $\mathsf{op}=\textsc{delete}$ and given $(\varepsilon_s,\delta_s)$
\end{algorithmic}
\end{algorithm}

When chosen carefully, the noise acts as a shock absorber for the model.
It dampens the unlearning effect that would leak the deleted user's information.
Our variance is specifically chosen to outpace both the (1) Hessian-sketch error of L-BFGS and (2) the cumulative impact of all future deletions.
We can estimate the second because we explicitly limit the number of deletions with deletion capacity and ensure its compliance with our odometer.
The unlearning algorithm is defined analogously. It takes the point to be deleted and computes the gradient of the loss at the point of prediction.
The L-BFGS method is used to estimate the influence of that point on the historical trajectory of the algorithm (approximated using the curvature points) to estimate the same Hessian inversion performed by the Newton Step.
\subsection{Deletion Capacity Accounting via an Odometer}

A critical component of the Memory Pair framework is its ability to manage the trade-off between unlearning fidelity and model regret.
This is accomplished through a strict accounting mechanism that we term a \textbf{deletion capacity odometer}.
We model deletion-capacity as a dynamic amount that changes based on the stream.
Unlike static analyses, we don't envision that a model has a fixed deletion capacity after a static training period.
This odometer regulates the model's utility (i.e., its regret bounds) in response to deletions. It is important to note that $(\varepsilon,\delta)$-certified unlearning budgets are monotone and do not replenish; our "replenishment" language refers strictly to utility. As additional data insertions improve regret bounds (e.g., by increasing $N$), the system can tolerate more deletions before violating its target utility threshold.

By first proving the most conservative case of regret bounds, we then show that strong convexity and dynamic regret analyses allow for $\mathcal{O}(\sqrt{T})$ bounds on deletion capacity and sample complexity.
For the sake of simplicity, we use the Zero-Concentrated Differential Privacy definition to scale the amount of noise used for deletions to the influence of the unlearned point.
\subsection{Composition under zero-Concentrated DP}
\label{subsec:zcdp_composition}

\paragraph{Capacity as a {$\rho$}-budget.}
We use Zero-Concentrated Differential Privacy (zCDP) \cite{Bun_Steinke_2016} for internal accounting due to its tighter composition properties, then convert to the standard $(\varepsilon,\delta)$-certified guarantee for reporting.
Fix a total fidelity budget $\rho_{\text{tot}}>0$. The model is initialized with $m$ deletion capacity and we allocate the budget \emph{uniformly},
$$
  \rho_{\text{s}}\;=\;\frac{\rho_{\text{tot}}}{m},
$$
so that after at most \(m\) deletions the cumulative privacy loss is
$
  \sum_{j=1}^{m}\rho_{\text{s}}
  = \rho_{\text{tot}}
$
by the additive composition rule of zCDP.
\paragraph{Per-delete noise calibration.}
In each \textsc{delete}$(u)$ the algorithm adds Gaussian noise
$
  \eta\sim\mathcal{N}\bigl(0,\sigma_{\text{s}}^{2}\mathbf I_{d}\bigr)
$
with scale
$$
  \sigma_{\text{s}}
  \;=\;
\frac{S_{\text{step}}}{\sqrt{2\rho_{\text{s}}}},
$$
where $S_{\text{step}}$ is a global $\ell_2$-sensitivity bound on the unlearning update derived from the loss function's properties (e.g., $S_{\text{step}} \le G/\lambda$ under strong convexity). The runtime accountant tracks deletions and the spent fidelity
$$
  \rho_{\text{spent}} = \texttt{deletions\_so\_far}\times\rho_{\text{s}}.
$$

A new deletion is rejected with a RuntimeError once
$$
  \rho_{\text{spent}} + \rho_{\text{s}} > \rho_{\text{tot}}.
$$

\paragraph{Why a finite $m$ is necessary.}
Each deletion increases both (i) the cumulative zCDP loss and
(ii) the \emph{variance} of the model parameters through \(\sigma_{\text{s}}^{2}\).
Beyond a problem-dependent threshold, the injected noise dominates the
learning signal and jeopardizes the algorithm’s average-regret
bound \(\gamma\) (\ref{subsec:adaptive_capacity}).
The $m$-deletion capacity therefore matches the largest $m$ for which
Theorems \ref{thm:gamma-capacity} and \ref{thm:gamma-sample}
remain valid.
\section{Theoretical Evaluation}
\label{sec:theory}

Our theoretical analysis shows that state of the art regret bounds are achievable for online machine unlearning.
We defer to previous works for the proof of $\mathcal{O}(\sqrt{T})$ regret and instead focus on the case of strong convexity (and sketch the proof in the appendix).
We instead start from a stronger assumption of $\lambda$-strong convexity and work backward using the convexity shortcut to show that such a shift yields stronger $\mathcal{O}(\ln T)$

\subsection{Convergence Under $\lambda$-Strong Convexity}

Sublinear average regret guarantees are standard for machine unlearning work.
It relies on the assumption of general convexity with respect to your loss function.
We provide tighter logarithmic regret bounds by transitioning our convexity assumption to one of strong convexity.
\begin{definition}[Strong $\lambda$-Convexity]
    A differentiable function f is $\lambda$-strongly convex if 
    $$
    f(y) \geq f(x) + \nabla f(x)^{\top}(y-x)+ \frac{\lambda}{2}\|y-x\|^{2}
    $$
    for some $\lambda$ and all $x,y$.
\end{definition}

The primary difference between the assumptions of general convexity and $\lambda$-strong convexity is the quadratic term at the end of the definition.
This provides a quadratic lower bound on the growth of the function and gives us some stability in terms of the gradient.
We then adjust our choice of $\eta_{t}$. We go from a nonincreasing deletion schedule to one that is strictly decreasing.
This decreasing learning rate cancels out the majority of the quadratic penalty term.
We then recognize that the regret bounds for the $\lambda$-strongly convex function are the sum of a telescoping and harmonic sum with logarithmic regret bounds.
The full proof is included in the appendix, but such guarantees provide strong performance guarantees for the memory pair algorithm in strongly convex settings.
\begin{theorem}[Logarithmic cumulative regret with $m$ certified deletions]
\label{thm:log_regret}
Let $\{\ell_{t}\}_{t=1}^T$ be a sequence of $\lambda$-strongly convex, $G$-Lipschitz loss functions over a closed, convex domain $W$ of diameter $D$.
Assume the inverse-curvature preconditioners maintained by online \mbox{L-BFGS} satisfy the uniform eigenvalue bounds $cI \preceq B_{t} \preceq CI$ (see Assumption~\ref{assum:bounded-hessian}).
Use the time-decaying step size $\eta_{t}=\frac{1}{\lambda t}$ for $t\ge 1$.
Suppose the algorithm processes at most $m$ deletion requests at arbitrary times, and each deletion is implemented by an additive Gaussian perturbation with scale $\sigma_{\text{step}}$ (independent across deletions), calibrated to meet a target $(\varepsilon^{*},\delta^{*})$ certification budget via standard Gaussian-mechanism composition.
Then, for any static comparator $w^{*}\in\arg\min_{w\in W}\sum_{t=1}^T \ell_{t}(w)$, with probability at least $1-\delta^{*}-\delta_{B}$,
\[
R_{T}(m)\;:=\;\sum_{t=1}^{T}\bigl(\ell_{t}(w_{t})-\ell_{t}(w^{*})\bigr)
\;\le\;
\frac{G^{2}}{\lambda c}\,\bigl(1+\ln T\bigr)\;+\;\Delta_{m},
\]
where the deletion contribution satisfies
\[
\Delta_{m} \;:=\; m\,G\,\sigma_{\text{step}}\,\sqrt{2\ln(1/\delta_{B})}.
\]
In particular,
\[
R_{T}(m)\;=\;\mathcal{O}\!\Bigl(\tfrac{G^{2}}{\lambda}\ln T\;+\;m\,G\,\sigma_{\text{step}}\,\sqrt{\ln\tfrac{1}{\delta_{B}}}\Bigr).
\]
\end{theorem}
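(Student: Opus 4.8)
The plan is to adapt the classical logarithmic-regret telescoping argument for strongly convex losses to the preconditioned L-BFGS update, then append the deletion noise as a separate high-probability term. Write the learner's play as $w_{t+1} = \Pi_W(w_t - \eta_t d_t)$ with quasi-Newton direction $d_t = B_t g_t$, $g_t = \nabla\ell_t(w_t)$, and $\eta_t = 1/(\lambda t)$. First I would linearize via $\lambda$-strong convexity,
$$\ell_t(w_t) - \ell_t(w^*) \le g_t^\top(w_t - w^*) - \tfrac{\lambda}{2}\|w_t-w^*\|^2,$$
reducing the problem to bounding $\sum_t g_t^\top(w_t-w^*)$ while retaining the negative quadratic term as a reserve to be spent against the telescoping.

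Next I would measure progress in the preconditioner-induced norm $\Phi_t := \|w_t-w^*\|_{B_t^{-1}}^2$. Expanding $w_{t+1}-w^* = (w_t-w^*) - \eta_t B_t g_t$ (projection only contracts $\Phi$) reproduces $g_t^\top(w_t-w^*)$ exactly in the cross term, yielding
$$g_t^\top(w_t-w^*) \le \tfrac{1}{2\eta_t}\bigl(\Phi_t - \Phi_{t+1}\bigr) + \tfrac{\eta_t}{2}\,g_t^\top B_t g_t + r_t,$$
where $r_t$ collects the drift from the change $B_{t+1}^{-1}-B_t^{-1}$ of the preconditioner between steps. Substituting $1/\eta_t = \lambda t$ and summing by parts telescopes the potential; the spectral bounds $cI \preceq B_t \preceq CI$ (Assumption~\ref{assum:bounded-hessian}) relate $\Phi_t$ to $\|w_t-w^*\|^2$ so the reserved $-\tfrac{\lambda}{2}\|w_t-w^*\|^2$ terms absorb the telescoped coefficients. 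The gradient term $\sum_t \tfrac{\eta_t}{2}g_t^\top B_t g_t$ is then bounded using $\|g_t\|\le G$, the spectrum, and the harmonic estimate $\sum_{t=1}^T 1/t \le 1+\ln T$, producing the leading $\tfrac{G^2}{\lambda c}(1+\ln T)$.

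For the deletion term, each of the $m$ \textsc{delete} steps injects an independent $\eta_j \sim \mathcal N(0,\sigma_{\text{step}}^2 I)$ into the iterate. Since we have already linearized, its dominant effect on the regret is the cross term $g_t^\top \eta_j$, a mean-zero Gaussian of scale at most $G\sigma_{\text{step}}$ by $\|g_t\|\le G$. A scalar Gaussian tail bound gives $|g_t^\top\eta_j| \le G\sigma_{\text{step}}\sqrt{2\ln(1/\delta_B)}$ per deletion with probability $1-\delta_B$, and summing the $m$ worst-case contributions yields $\Delta_m = m\,G\,\sigma_{\text{step}}\sqrt{2\ln(1/\delta_B)}$. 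The certification slack $\delta^*$ from the Gaussian-mechanism calibration of $\sigma_{\text{step}}$ is union-bounded with $\delta_B$ to give the stated $1-\delta^*-\delta_B$ confidence, and combining with the deterministic regret bound proves the claim.

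The hard part will be closing the telescoping in the second step with the clean constant. A naive $B_t^{-1}$-norm potential leaves a residual proportional to $(\tfrac1c-1)\sum_t\|w_t-w^*\|^2$, which is not automatically nonpositive when $c<1$ and, bounded crudely by $D^2$, would be linear in $T$ and destroy the logarithmic rate. Making the cancellation genuinely close therefore requires exploiting structure of the L-BFGS preconditioner — e.g.\ a monotone or slowly varying $B_t$ so that $r_t$ has controllable sign, or coupling the effective modulus to the spectrum (the losses are $\lambda c$-strongly convex in the $B_t^{-1}$ metric, which interacts with the prescribed step size $1/(\lambda t)$) — rather than applying the spectral bounds term by term. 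A secondary check is whether the $m$ per-deletion tails should be union-bounded (costing $\sqrt{\ln(m/\delta_B)}$) or bounded through the summed Gaussian; I would confirm that the stated linear-in-$m$, $\sqrt{\ln(1/\delta_B)}$ form is the intended conservative accounting.
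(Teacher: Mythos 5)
Your proposal follows essentially the same route as the paper: a strong-convexity linearization, a telescoping potential with $\eta_t = 1/(\lambda t)$ whose residual is killed by the quadratic reserve, the harmonic-sum bound $\sum_{t\le T} 1/t \le 1+\ln T$ for the leading $\tfrac{G^2}{\lambda c}(1+\ln T)$ term, and a Lipschitz-times-Gaussian-tail argument summed over the $m$ deletions for $\Delta_m$ (the paper likewise settles for the conservative linear-in-$m$, $\sqrt{2\ln(1/\delta_B)}$ accounting rather than a $\sqrt{\ln(m/\delta_B)}$ union bound). The one substantive difference is that the ``hard part'' you flag --- reconciling the $B_t^{-1}$-weighted potential with the Euclidean strong-convexity reserve when the preconditioner drifts and $c<1$ --- is simply asserted away in the paper, which posits the preconditioned one-step inequality with the $1/(2\eta_t c)$ coefficient as ``standard'' and telescopes without tracking the $B_{t+1}^{-1}-B_t^{-1}$ terms; your version is more honest about this unresolved step, and closing it cleanly would actually strengthen the paper's own argument.
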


\textbf{The above result is novel for machine unlearning.} The strong convexity assumption and adaptive step size produce telescoping harmonic sums which can be bounded uniformly $\sum^{T}_{t=1}\frac{1}{t} \leq 1 + \ln{T}$.
\subsection{Dynamic Regret with a Pathwise Comparator}
\label{subsec:dynamic_regret}

In nonstationary data streams it is unrealistic to benchmark the learner
against a single best model.
Dynamic regret instead compares the learner to an \emph{oracle path}
$\{w_{t}^{*}\}_{t=1}^{T}$ that may drift over time.
Formally
$$
  R_{t}^{\mathrm{dyn}}
  \;=\;
\sum_{t=1}^{T}\bigl[\ell_{t}(w_{t})-\ell_{t}(w_{t}^{*})\bigr].
$$

\begin{definition}[Path-length of the comparator]
\label{def:path_length}
The \emph{path-length} of the comparator sequence is
$
  P_{t}
  =\sum_{t=2}^{T}\lVert w_{t}^{*}-w_{t-1}^{*}\rVert_{2}.
$
It quantifies the environment’s nonstationarity
and appears in all known lower bounds for dynamic regret
\footnote{See Appendix for a short proof that any algorithm suffers
$\Omega(P_{t})$ dynamic regret in the worst case.}
\end{definition}

We keep the strongly-convex learning rate
$$
  \eta_{t} = \tfrac{1}{\lambda t},
$$
which already yielded \(O(\ln T)\) \emph{static} regret.
The only difference is that we now allow the comparator to move,
contributing an additional first-order error term that scales with $P_{T}$, or a measure of the stream's drift.
\begin{theorem}[Dynamic regret under $\lambda$-strong convexity]
\label{thm:dyn_regret}
Let Assumptions~\ref{assum:lipschitz}, –\ref{assum:stable-lbfgs} hold
and let the loss sequence
$\{\ell_{t}\}_{t=1}^{T}$ be $\lambda$-strongly convex and $G$-Lipschitz.
Run Algorithm~\ref{def:memory-pair} with
$\eta_{t} = (\lambda t)^{-1}$.
Then for \emph{any} comparator path $\{w_{t}^{*}\}_{t=1}^{T}$ we have
$$
  R_{T}^{\mathrm{dyn}}
  \;\le\;
\underbrace{\frac{G^{2}}{\lambda c}\bigl(1+\ln T\bigr)}_{\text{static term}}
  \;+\;
  \underbrace{G P_{T}}_{\text{pathwise term}}
$$
so that
$
  R_{t}^{\mathrm{dyn}}
  =
  \mathcal{O}\!\bigl(\tfrac{G^{2}}{\lambda}\ln T + G P_{T}\bigr).
$
\end{theorem}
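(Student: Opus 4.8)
The plan is to reuse the static telescoping argument behind Theorem~\ref{thm:log_regret} essentially verbatim and to isolate the single new ingredient contributed by a comparator that drifts. In the static proof the step size $\eta_t=(\lambda t)^{-1}$ together with the eigenvalue bounds $cI\preceq B_t\preceq CI$ controls $\sum_t g_t^{\top}(w_t-u)$ for a \emph{fixed} $u$ through a telescoping potential built from $\|w_t-u\|^2$, and the harmonic gradient sum collapses to $\tfrac{G^2}{\lambda c}(1+\ln T)$ via $\sum_{t=1}^T t^{-1}\le 1+\ln T$. I would show that swapping the fixed $u$ for the path $\{w_t^{\ast}\}$ leaves this static term untouched and adds exactly one first-order term proportional to the path length $P_T$.

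Concretely, I would first write the per-step strong-convexity inequality $\ell_t(w_t)-\ell_t(w_t^{\ast})\le g_t^{\top}(w_t-w_t^{\ast})-\tfrac{\lambda}{2}\|w_t-w_t^{\ast}\|^2$, then use the standard three-point expansion of the preconditioned update to express $g_t^{\top}(w_t-w_t^{\ast})$ as $\tfrac{1}{2\eta_t}\bigl(\|w_t-w_t^{\ast}\|^2-\|w_{t+1}-w_t^{\ast}\|^2\bigr)$ plus a gradient term bounded by $\tfrac{\eta_t G^2}{2c}$ exactly as in Theorem~\ref{thm:log_regret}. Summing these gradient terms reproduces the static $\tfrac{G^2}{\lambda c}(1+\ln T)$. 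The only nonstandard step is realigning the potential: the term $\|w_{t+1}-w_t^{\ast}\|$ must be turned into $\|w_{t+1}-w_{t+1}^{\ast}\|$ so the sum telescopes across $t$, and the triangle inequality does this at the cost of the increment $\|w_{t+1}^{\ast}-w_t^{\ast}\|$. The strong-convexity penalty $-\tfrac{\lambda}{2}\|w_t-w_t^{\ast}\|^2$ is precisely what offsets the growth $\eta_t^{-1}=\lambda t$ of the telescoping weights, so the realignment contributes a clean first-order sum $\sum_t\|w_{t+1}^{\ast}-w_t^{\ast}\|=P_T$, which after bounding the surviving distance factors by the Lipschitz/gradient bound $G$ yields the pathwise term $G\,P_T$.

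The hard part is making the comparator-motion contribution collapse to \emph{exactly} $G\,P_T$ rather than a term inflated by $\eta_T^{-1}=\lambda T$ or by the domain diameter $D$; a naive triangle-inequality bound on the quadratic potential leaves residuals scaled by $\lambda t\,\|w_{t+1}^{\ast}-w_t^{\ast}\|$, which sum to something far larger than $P_T$. The resolution is to spend the strong-convexity term $-\tfrac{\lambda}{2}\|w_t-w_t^{\ast}\|^2$ against those residuals so that the drift is charged only once per unit of motion. Equivalently---and this is the cleaner route I would fall back on if the telescoping bookkeeping becomes unwieldy---strong convexity of $\ell_t$ yields a contraction $\|w_{t+1}-w_{t+1}^{\ast}\|\le\rho\,\|w_t-w_t^{\ast}\|+\|w_{t+1}^{\ast}-w_t^{\ast}\|$ with $\rho<1$, whose geometric summation gives $\sum_t\|w_t-w_t^{\ast}\|=\mathcal{O}(P_T)$; combined with $G$-Lipschitzness $\ell_t(w_t)-\ell_t(w_t^{\ast})\le G\|w_t-w_t^{\ast}\|$ and with the $\ln T$ transient inherited from the static cold-start term, this delivers the claimed static-plus-$G\,P_T$ bound.
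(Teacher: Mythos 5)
Your overall plan (static telescoping plus a one-time charge per unit of comparator motion) has the right shape, but the route you take to the $G\,P_T$ term is not the paper's, and the step you yourself flag as "the hard part" is a genuine gap that your proposed fixes do not close. The paper never realigns the quadratic potential at all: it inserts the \emph{lagged} comparator $u_t := w_{t-1}^{*}$, writes $R_T^{\mathrm{dyn}} = \sum_t\bigl(\ell_t(w_t)-\ell_t(u_t)\bigr) + \sum_t\bigl(\ell_t(u_t)-\ell_t(w_t^{*})\bigr)$, bounds the second sum by $G\sum_t\|w_{t-1}^{*}-w_t^{*}\| = G\,P_T$ using only $G$-Lipschitzness evaluated at comparator points (not at the iterates), and handles the first sum with the same telescoping-plus-harmonic-sum machinery as Theorem~\ref{thm:log_regret}. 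The path-length cost therefore enters through function values, at a price of exactly $G$ per unit of motion, and never has to fight the growing weights $\eta_t^{-1}=\lambda t$.

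Your route realigns $\|w_{t+1}-w_t^{*}\|^2$ to $\|w_{t+1}-w_{t+1}^{*}\|^2$ inside the potential. The residual this creates at step $t$ is of the form $\tfrac{1}{2\eta_t c}\bigl(2\|w_{t+1}-w_t^{*}\|\,\|w_{t+1}^{*}-w_t^{*}\|+\|w_{t+1}^{*}-w_t^{*}\|^2\bigr)$, a \emph{first-order} cross term in the tracking distance carrying the weight $\lambda t/(2c)$. The available negative term $-\tfrac{\lambda}{2}\|w_t-w_t^{*}\|^2$ is \emph{second order} and is already needed to cancel the growth $\tfrac{1}{2\eta_t c}-\tfrac{1}{2\eta_{t-1} c}$ of the telescoping weights in the static part of the argument; even if it were free, absorbing a first-order cross term into a quadratic via Young's inequality leaves behind terms of order $\lambda t^{2}\|w_{t+1}^{*}-w_t^{*}\|^{2}/c$, whose sum is governed by $\sum_t t^{2}\|w_{t+1}^{*}-w_t^{*}\|^{2}$, not by $P_T$. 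So "spend the strong-convexity term against the residuals" does not produce $G\,P_T$. Your fallback contraction argument fails for a related reason: with the mandated step size $\eta_t=(\lambda t)^{-1}$ the per-step contraction factor is $1-\Theta(1/t)\to 1$, so there is no uniform $\rho<1$ and the geometric summation giving $\sum_t\|w_t-w_t^{*}\|=\mathcal{O}(P_T)$ breaks down; that argument needs a constant step size, and even then it yields $\tfrac{G}{1-\rho}P_T$ rather than $G\,P_T$. To repair the proof, adopt the paper's decomposition: keep the potential anchored to a comparator that is constant within each telescoping step, and pay for the drift once, through the Lipschitz bound on $\ell_t(w_{t-1}^{*})-\ell_t(w_t^{*})$.
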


When $P_{T}=0$ we recover the static-comparator bound of
Theorem \ref{thm:log_regret}.  Conversely, whenever $P_{T}=\Omega(T)$ the linear term dominates, and Theorem \ref{thm:dyn_regret} matches known minimax lower bounds for
strongly convex online learning in drifting environments \cite{vanwaerebeke2025when}.
The memory-pair algorithm preserves its logarithmic dependence on time while adapting linearly to environmental drift.
In practice, $P_{t}$ is often \emph{sublinear} because real data tend to evolve smoothly, so the bound above remains sharply better than classical $\mathcal{O}(\sqrt{T})$ results under general convexity.
\subsection{Adaptive Deletion Capacity \& Sample Complexity}
\label{subsec:adaptive_capacity}

We move on to define our sample complexity and deletion capacity bounds.
These quantities, introduced in Sekhari et al.'s paper, were initially defined for a static batch training case.
The assumption was that ML engineers specify an average regret guarantee and maximum number of deletions, then work backward to find the minimum number of samples required.
The goal is to minimize the sample complexity for a particular deletion capacity, or to minimize training time to achieve the maximum number of deletions before accuracy loss.
But the assumption of a fixed deletion capacity doesn't fit into a stream-native paradigm.
The learner not only unlearns data after the initial training, but continues to learn in the workload following training.
It makes sense that the deletion capacity would be an adaptive quantity that is lowered when processing deletions and replenished when learning additional data.
\paragraph{AdaGrad–style geometry.} We borrow the cumulative squared-gradient statistic for an adaptive version of sample complexity and deletion capacity.
$$
  S_{t} \;=\;\sum_{t=1}^{T}\lVert g_{t}\rVert_2^{2}
$$
$$
  g_{t} = \nabla\ell_{t}(w_{t}).
$$
Following the proof in Appendix~0.2.4 we run online L-BFGS with the
\emph{data-driven} step size
$$
  \eta_{t} \;=\;\dfrac{D}{\sqrt{S_{t}}}
$$
Under Assumptions~\ref{assum:lipschitz}–\ref{assum:stable-lbfgs} this yields the
adaptive regret bound
\begin{equation}
  R_{t}
  \;\le\;
G D\,\sqrt{c\,C\,S_{t}}.
  \label{eq:adagrad_regret}
\end{equation}
Equation \eqref{eq:adagrad_regret} improves on
Theorem~\ref{thm:log_regret} whenever
\(S_{t}\ll G^{2}T\) (e.g. on benign or sparse streams).
\paragraph{Noise from unlearning.}
Each deletion injects Gaussian noise of scale
$
  \sigma_{\text{step}}
$ that is proportional to the gradient bound $G$ and strong convexity parameter $\lambda$, which determine the strength of the curvature conditions.
The extra loss incurred by $m$ deletions concentrates as
$
  \Delta_m
  := mG\sigma_{\text{step}}\sqrt{2\ln(1/\delta_{B})}
$
with probability $1-\delta_{B}$.
These bounds provide us a stronger understanding of the comparator's role in regret.
Using the improved regret bounds, we can tighten our sample complexity and deletion capacity to less conservative assumptions.
\begin{theorem}[{\bf $\gamma$-Deletion Capacity Bound}]
\label{thm:gamma-capacity}
\label{thm:gamma-adapt-capacity}
Fix a target average regret $\gamma>0$, confidence $\delta_B\in(0,1)$, and stream length $N$.
Let $S_{N}=\sum_{t=1}^N \|g_t\|^{2}$ and suppose the adaptive regret bound
$$
R_{N} \;\le\; GD\,\sqrt{c\,C\,S_{N}}
$$
holds, and that $m$ deletions contribute an additional regret
$$
\Delta_m \;=\;
m\,G\,\sigma_{\text{step}}\sqrt{2\ln(1/\delta_B)}
$$
with probability at least $1-\delta_{B}$. Then any $m$ satisfying
\begin{equation}
\label{eq:cap-main}
m \;\le\; \frac{\gamma N \;-\; GD\,\sqrt{c\,C\,S_{N}}}{G\,\sigma_{\text{step}}\sqrt{2\ln(1/\delta_B)}}
\end{equation}
guarantees $\frac{1}{N}R_{N}(m)\le\gamma$ with probability at least $1-\delta_B$.
In particular, under the
conservative worst case $S_{N}=G^{2}N$,
\begin{equation}
\label{eq:cap-worst}
m \;\le\; \frac{\sqrt{N}\bigl(\gamma\sqrt{N}\;-\;G^{2} D \sqrt{c\,C}\bigr)}
{G\,\sigma_{\text{step}}\sqrt{2\ln(1/\delta_B)}}\,.
\end{equation}
\end{theorem}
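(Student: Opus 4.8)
The plan is to read the statement as a one-line algebraic consequence of an additive regret decomposition, so that the real work is assembling the two hypotheses and solving a single linear inequality in $m$. First I would split the cumulative regret after $m$ deletions into a deletion-free optimization term and a perturbation term,
$$R_N(m)\;\le\;\underbrace{GD\sqrt{c\,C\,S_N}}_{\text{learning}}\;+\;\underbrace{\Delta_m}_{\text{deletion noise}},$$
invoking the adaptive regret hypothesis (which is exactly Equation~\eqref{eq:adagrad_regret} at horizon $N$) for the first summand and the stated concentration $\Delta_m = mG\sigma_{\text{step}}\sqrt{2\ln(1/\delta_B)}$ for the second. Since the learning bound is taken to hold pathwise while the deletion bound holds on an event of probability at least $1-\delta_B$, the displayed inequality is valid on that same event and no union bound is required.

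Next I would impose the utility target directly. The condition $\tfrac{1}{N}R_N(m)\le\gamma$ is equivalent to $R_N(m)\le\gamma N$, so it suffices to require
$$GD\sqrt{c\,C\,S_N}\;+\;mG\sigma_{\text{step}}\sqrt{2\ln(1/\delta_B)}\;\le\;\gamma N.$$
This is linear in $m$ with strictly positive slope $G\sigma_{\text{step}}\sqrt{2\ln(1/\delta_B)}$, so isolating $m$ gives precisely the capacity bound~\eqref{eq:cap-main}; every $m$ below that threshold keeps the average regret at or under $\gamma$ on the probability-$(1-\delta_B)$ event, which is the main claim. For the worst-case corollary I would then substitute $S_N=G^2N$, use $\sqrt{c\,C\,S_N}=G\sqrt{c\,C}\,\sqrt{N}$ so that $GD\sqrt{c\,C\,S_N}=G^2D\sqrt{c\,C}\,\sqrt{N}$, and factor $\sqrt{N}$ out of the numerator via $\gamma N - G^2D\sqrt{c\,C}\,\sqrt{N} = \sqrt{N}\bigl(\gamma\sqrt{N}-G^2D\sqrt{c\,C}\bigr)$, reproducing Equation~\eqref{eq:cap-worst}.

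The part that deserves care is not the manipulation but the feasibility and interpretation of the numerator. The bound is only informative when $\gamma N \ge GD\sqrt{c\,C\,S_N}$, i.e.\ once the learner has ingested enough data that its deletion-free average regret already sits below the target $\gamma$; if this fails, the admissible capacity is nonpositive, correctly reporting that no deletion can be certified until more insertions arrive. I would emphasize that this is exactly the stream-native replenishment effect described in Section~\ref{subsec:zcdp_composition}: in the worst case the numerator grows like $\gamma N$ while the subtracted learning penalty grows only like $\sqrt{S_N}=\mathcal{O}(\sqrt{N})$, so the admissible $m$ increases with the horizon, matching the odometer semantics. The one remaining subtlety I would flag explicitly is that this inequality governs \emph{utility} only; the monotone, non-replenishing zCDP accounting must still be checked separately so that the $(\varepsilon,\delta)$ certification remains intact independently of this capacity computation.
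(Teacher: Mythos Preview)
Your proposal is correct and follows essentially the same route as the paper: decompose $R_N(m)$ additively into the adaptive learning term $GD\sqrt{cCS_N}$ and the deletion term $\Delta_m$, impose $R_N(m)\le\gamma N$, solve the resulting linear inequality for $m$, and then substitute $S_N=G^2N$ with a $\sqrt{N}$ factorization for the worst case. Your added remarks on feasibility when the numerator is nonpositive and on the separation between utility and zCDP accounting are consistent with the paper's own caveats.
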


\begin{theorem}[{\bf $\gamma$-Sample Complexity Bound}]
\label{thm:gamma-sample}
Fix a target average regret $\gamma>0$, maximal deletions $m\in\mathbb{N}$, and confidence $\delta_B\in(0,1)$.
Let $S_{N}=\sum_{t=1}^N\|g_t\|^{2}$ and suppose (i) the adaptive regret bound
and (ii) the aggregate deletion noise contribution hold with probability at least $1-\delta_B$.
Then any horizon $N$ such that
\begin{equation}
\label{eq:sample-master}
\frac{1}{N}\,R_{N}(m)\;=\;\frac{GD\sqrt{c\,C\,S_{N}}+\Delta_m}{N}\;\le\;\gamma
\end{equation}
guarantees $\frac{1}{N}R_{N}(m)\le\gamma$ with probability $\ge 1-\delta_B$.
In particular, under the conservative worst case $S_{N}\le G^{2} N$, a sufficient condition is
\begin{equation}
\label{eq:sample-exact-root}
\sqrt{N}\;\ge\; \frac{A+\sqrt{A^{2}+4\gamma B}}{2\gamma}\,, \qquad A=G^{2}D\sqrt{c\,C}, \qquad B=\Delta_m.
\end{equation}
\end{theorem}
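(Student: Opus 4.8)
The plan is to treat this as a deterministic algebraic inequality that lives on a single high-probability event, then reduce the horizon requirement to solving one quadratic. First I would fix the event $\mathcal{E}$ on which both hypotheses hold: the adaptive regret bound (i), $R_{N}\le GD\sqrt{c\,C\,S_{N}}$, and the deletion-noise concentration (ii), under which the aggregate perturbation contributes at most $\Delta_{m}=mG\sigma_{\text{step}}\sqrt{2\ln(1/\delta_{B})}$. By assumption $\Pr[\mathcal{E}]\ge 1-\delta_{B}$; if (i) and (ii) carry separate failure budgets, a union bound over half-budgets $\delta_{B}/2$ recovers the same $\delta_{B}$, and I would record this bookkeeping explicitly. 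On $\mathcal{E}$ the two contributions add, giving $R_{N}(m)\le GD\sqrt{c\,C\,S_{N}}+\Delta_{m}$, and dividing by $N$ reproduces the master inequality \eqref{eq:sample-master}. The data-dependent claim is then immediate: the stated hypothesis $\tfrac{1}{N}R_{N}(m)\le\gamma$ is literally the conclusion, so nothing further is needed.

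For the worst-case form I would invoke $G$-Lipschitzness to bound $\|g_{t}\|\le G$, hence $S_{N}\le G^{2}N$. Substituting into the regret bound yields $R_{N}(m)\le G^{2}D\sqrt{c\,C}\,\sqrt{N}+\Delta_{m}=A\sqrt{N}+B$ with $A=G^{2}D\sqrt{c\,C}$ and $B=\Delta_{m}$ (noting that $B$ carries no $N$-dependence). The target $\tfrac{1}{N}R_{N}(m)\le\gamma$ then becomes the sufficient condition $A\sqrt{N}+B\le\gamma N$.

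The remaining step is to solve this for $N$. I would set $u=\sqrt{N}>0$, turning the inequality into $\gamma u^{2}-A u-B\ge 0$. Since $\gamma>0$ the parabola opens upward, and because $B=\Delta_{m}\ge 0$ its constant term $-B$ is nonpositive, so the quadratic has exactly one nonnegative root $u_{+}=\frac{A+\sqrt{A^{2}+4\gamma B}}{2\gamma}$ (the other root is $\le 0$ and irrelevant for $u>0$). Thus the inequality holds precisely when $u\ge u_{+}$, i.e.\ $\sqrt{N}\ge\frac{A+\sqrt{A^{2}+4\gamma B}}{2\gamma}$, which is \eqref{eq:sample-exact-root}. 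I would close by noting monotonicity: once $\sqrt{N}$ clears $u_{+}$ the gap $\gamma u^{2}-Au-B$ only grows, so the guarantee persists for every larger horizon.

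The main obstacle is not the algebra, which is routine, but the probabilistic bookkeeping: I want to confirm that the single parameter $\delta_{B}$ correctly governs the \emph{simultaneous} validity of both the L-BFGS spectral/regret event and the Gaussian-tail concentration of the deletion noise, since this is the only place the $\sqrt{\ln(1/\delta_{B})}$ factor enters. A secondary subtlety worth one sentence is that $\sigma_{\text{step}}$, and hence $B$, is held fixed here even though in the full system it is calibrated against the per-deletion budget $\rho_{\text{s}}=\rho_{\text{tot}}/m$; decoupling that calibration from the horizon argument is what keeps the quadratic clean and the larger root well-defined.
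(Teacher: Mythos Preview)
Your proposal is correct and follows essentially the same route as the paper: bound $S_{N}\le G^{2}N$, reduce to $A\sqrt{N}+B\le\gamma N$, substitute $x=\sqrt{N}$, and read off the larger root of $\gamma x^{2}-Ax-B\ge 0$. Your write-up is in fact more careful than the paper's, which is terse on the probabilistic bookkeeping and the sign of the discarded root; your observations about the union-bound accounting and the $N$-independence of $B$ are accurate and worth keeping.
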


The proof is in the appendix for the sake of readability.
\paragraph{Interpretation.}
The adaptive statistic $S_{t}$ tightens both bounds automatically:
on "easy" streams where gradients decay, $S_{t}=o(T)$ and the required
sample size (or the forgivable number of deletions) shrinks
accordingly.
In the worst case, our formulas reduce to the familiar
$O(\sqrt{T})$ static analysis, so no guarantees are lost.
\section{Experimental Analysis}

\subsection{Sublinear Regret Experiment}

We evaluate the performance of the memory pair against AdaGrad, Stochastic Gradient Descent, and the Online Newton Step.
While Stochastic Gradient Descent and the Online Newton Step did not achieve bounded regret, the Memory Pair and AdaGrad converge to a near-zero instantaneous regret.
We construct the MNIST dataset as a stream of insert events in order to evaluate the model's performance as it learns.
\begin{figure}
    \centering
    \includegraphics[width=1\linewidth]{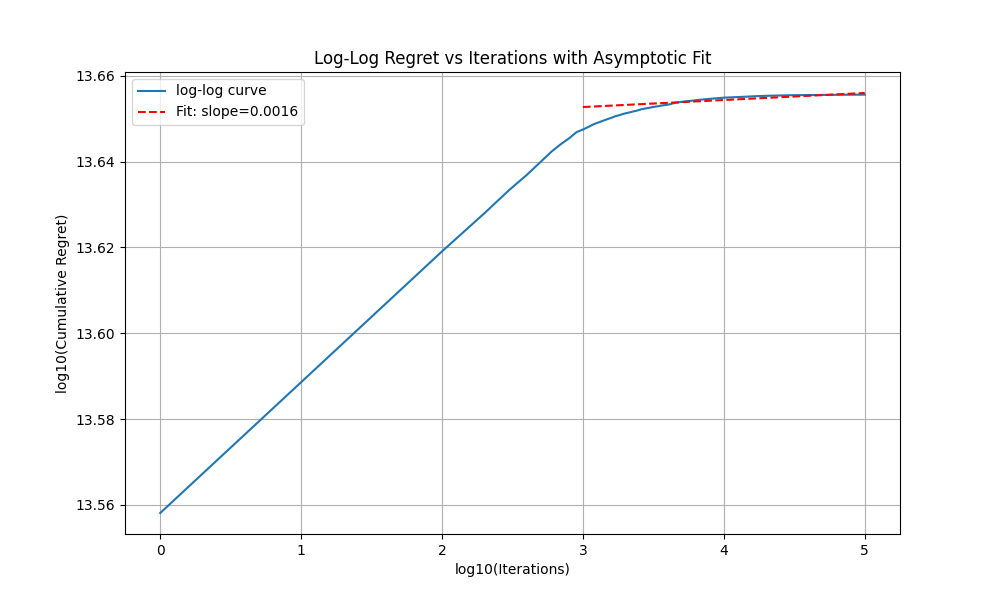}
    \caption{The cumulative regret of the Memory Pair has vanishing instantaneous regret.}
    \label{fig:enter-label}
\end{figure}

Further experiments are forthcoming that test the deletion capacity of the algorithm.
\section{Conclusion}

We have introduced the Memory Pair framework, the first online learning–unlearning algorithm that achieves sublinear regret while providing $(\varepsilon, \delta)$-certified unlearning under an interleaved stream of insert, delete, and predict operations.
By coupling an online L-BFGS learner with its symmetric unlearning counterpart, the framework eliminates the prohibitive storage and precompute costs of the Newton's method, enabling near-instant deletion updates in memory-constrained streaming environments.
Our theoretical analysis establishes both general and $\lambda$-strongly convex regret bounds, culminating in an $\mathcal{O}(\tfrac{G^{2}}{\lambda}\log T + G P_{t})$ dynamic regret guarantee that adapts to distribution drift.
We further derive adaptive, data-dependent measures on model lifespan using an AdaGrad-style bound, and integrate these into a live capacity odometer that halts unlearning precisely when either regret or fidelity constraints are violated.
Preliminary simulations confirm the framework’s robustness but also reveal the conservativeness of our current deletion capacity estimates, especially under large Lipschitz constants and curvature bounds.
Ongoing experiments will quantify the practical trade-off between sample complexity, injected noise, and sustained accuracy, allowing us to refine the formulas and potentially adopt less conservative accounting strategies.
The Memory Pair offers a practical and theoretically grounded foundation for stream-native privacy-preservingunlearning.
Future work will focus on tightening regret bounds, exploring alternative deletion-capacity odometers, and extending the framework to federated and non-Euclidean learning domains.
\section{Appendix}

\subsection{BFGS: from the Batch to the Online Setting}

The BFGS procedure is a quasi-Newton method that recursively approximates the inverse Hessian 
$B_{t+1}^{-1}$, avoiding the $\mathcal{O}(d^{2})$ cost of direct inversion.
While effective in batch optimization, applying BFGS in streaming or memory-constrained environments is challenging because the list of curvature pairs grows linearly with time and repeated inversions are infeasible.
We review the original BFGS update introduced by Nocedal and Wright \cite{Liu_Nocedal_1989,mokhtari2015online} before describing its limited-memory and online variants that enable efficient use in stream-native learning.
\begin{definition}[BFGS Hessian Approximation]

We define curvature pairs $(v_{t}, r_{t})$ as the approximation of the curvature at iteration $t$ to be
$$
v_{t} := w_{t+1}-w_{t},\qquad
r_{t} := s(w_{t+1})-s(w_{t}).
$$

The idea is to create two different one-shot updates, which will be used to approximate the next state of the Hessian and its inverse, $B_{t+1}\text{ and }B_{t+1}^{-1}$,

$$
\rho_{t} = (v_{t}^{\top}r_{t})^{-1} \text{ and } V_{t} = I - \rho_{t}r_{t}v_{t}^{\top}.
$$

We then apply these updates to current state of the Hessian approximation and its inverse
$$
B_{t+1} = B_{t} + \frac{r_{t} r_{t}^\top}{v_{t}^\top r_{t}} - \frac{B_{t} v_{t} v_{t}^\top B_{t}}{v_{t}^\top B_{t} v_{t}},
$$

and 

$$
B_{t+1}^{-1} = V_{t}^{\top} B_{t}^{-1} V_{t} + \rho_{t}\, v_{t} v_{t}^{\top}.
$$
\end{definition}

The above approximation requires a list of curvature pairs that increases linearly with the number of iterations.
A more natural choice for memory-constrained stream-native learning is the L-BFGS method, which restricts the list of curvature pairs to a window of $\tau$ points most representative of the global curvature of the loss surface.
\begin{definition}[L-BFGS with Limited-Memory Recursion]

Given a series of $\tau$ most recent curvature pairs 
$\{(v_{t-\tau+u}, r_{t-\tau+u})\}_{u=0}^{\tau-1}$, 
the inverse Hessian approximation is updated recursively as $B_{t, 0} \succ 0$ given

$$
B_{t, u+1}^{-1} = V^{\top}_{t-\tau+u}B_{t,u}^{-1}V_{t-\tau+u} + \rho_{t-\tau+u}v_{t-\tau+u}v_{t-\tau+u}^{\top}
$$
for $u = 1, 2...\tau-1$ where
$$
\rho_{t-\tau+u} = (v_{t-\tau+u}^{\top} r_{t-\tau+u})^{-1},
$$
$$
V_{t-\tau+u} = I - \rho_{t-\tau+u} r_{t-\tau+u} v_{t-\tau+u}^{\top}.
$$

After $\tau$ updates the Hessian inverse approximation is
$$
B^{-1}_{t} := B^{-1}_{t,\tau}.
$$
\end{definition}

In the case that the sample is incrementally revealed to the learner, we can adaptively define the curvature pair using a running average of the gradient.
This allows us to naturally transition the notion of a curvature pair to the online setting that averages the loss using the running sum of the gradients.
\begin{definition}[Online (Stochastic) L-BFGS]

The stochastic gradient at $w$ given a mini-batch $\tilde{\theta_{t}}$ is
$$
\hat{s}(w,\tilde{\theta_{t}}) = \frac{1}{L} \sum_{l=1}^{L} \nabla f(w,\theta_{l}).
$$

We define the stochastic gradient variation as
$$
\hat{r}_{t} := \hat{s}(w_{t+1},\tilde{\theta_{t}}) - \hat{s}(w_{t},\tilde{\theta_{t}}),
$$
using a common sample set $\tilde{\theta_{t}}$ to ensure stability.
The stochastic counterparts of the scaling and reflector terms are
$$
\hat{\rho}_{t-\tau+u} = (v_{t-\tau+u}^{\top} \hat{r}_{t-\tau+u})^{-1}, 
\qquad
\hat{V}_{t-\tau+u} = I - \hat{\rho}_{t-\tau+u} \hat{r}_{t-\tau+u} v_{t-\tau+u}^{\top}.
$$

We define the recursive step as
$$
\hat{B}^{-1}_{t,0} \succ 0 \ \text{ given }
\hat{B}^{-1}_{t,u+1} = \hat{V}_{t-\tau+u}^{\top} \hat{B}^{-1}_{t,u} \hat{V}_{t-\tau+u}
  + \hat{\rho}_{t-\tau+u}\, v_{t-\tau+u} v_{t-\tau+u}^{\top}, 
  \quad u = 0,\dots,\tau-1,
$$
and the final inverse Hessian estimate is
$$
\hat{B}^{-1}_{t} := \hat{B}^{-1}_{t,\tau}.
$$
\end{definition}

\subsection{Proving the Convergence of the Hessian Approximation}

With the definitions used above, we defer to Mokhtari and Ribeiro's convergence proof of the Hessian approximation.
This is achieved by proving that the Hessian inversion produced by the oL-BFGS algorithm has bounded eigenvalues as indicated by its trace and determinant in Corollary \ref{cor:trace-det-bound}.
\begin{assumption}[G-Bounded Gradients]
\label{assum:lipschitz}
The loss function $\mathcal{L}(\theta; z) = \frac{1}{2}(\theta^{\top} x - y)^{2}$ has G-bounded gradients for any data point $z$.
For $g = \nabla_{\theta} \mathcal{L}(\theta; z)$, we assume $||g||_{2} \leq G$.
\end{assumption}

\begin{assumption}[Stable L-BFGS Approximation]
\label{assum:stable-lbfgs}
The online L-BFGS procedure maintains an inverse Hessian approximation, $B^{-1}$, whose spectral norm is bounded.
This reflects the $\lambda$-strong convexity of the learning problem, such that $||B^{-1}||_{2} \leq 1/\lambda$.
\end{assumption}

\begin{lemma}[Positive Curvature Condition]
\label{lem:positive-curvature}
Let the variable variation be $v_{t} = w_{t+1} - w_{t}$ and the stochastic gradient variation be $\hat{r}_{t} = \hat{s}(w_{t+1}, \tilde{\theta}_{t}) - \hat{s}(w_{t}, \tilde{\theta}_{t})$.
If Assumption \ref{assum:bounded-hessian} holds, the inner product of these variations is strictly positive and bounded below:
$$
\hat{r}_{t}^{\top} v_{t} \ge \tilde{m} ||v_{t}||^{2}
$$
where $\tilde{m} > 0$ is the lower bound on the eigenvalues of the instantaneous Hessian.
\end{lemma}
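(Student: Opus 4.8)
The plan is to realize the stochastic gradient variation $\hat{r}_{t}$ as the image of $v_{t}$ under an averaged Hessian, and then read off the lower bound directly from the eigenvalue floor supplied by Assumption~\ref{assum:bounded-hessian}. The crucial structural fact I would exploit is that $\hat{r}_{t}$ is built from a \emph{common} sample set $\tilde{\theta}_{t}$: both $\hat{s}(w_{t+1},\tilde{\theta}_{t})$ and $\hat{s}(w_{t},\tilde{\theta}_{t})$ are gradients of the \emph{same} empirical objective $\hat{f}(\cdot,\tilde{\theta}_{t})$. This makes $\hat{s}(\cdot,\tilde{\theta}_{t})$ a conservative (gradient) field, which is exactly what licenses a fundamental-theorem-of-calculus representation of the difference.

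Concretely, I would first integrate the Hessian along the segment joining $w_{t}$ to $w_{t+1}=w_{t}+v_{t}$, writing
$$
\hat{r}_{t} \;=\; \hat{s}(w_{t+1},\tilde{\theta}_{t}) - \hat{s}(w_{t},\tilde{\theta}_{t}) \;=\; \Bigl(\int_{0}^{1}\nabla^{2}\hat{f}(w_{t}+\xi v_{t},\tilde{\theta}_{t})\,d\xi\Bigr) v_{t} \;=:\; \tilde{H}_{t}\,v_{t},
$$
where $\tilde{H}_{t}$ is the averaged Hessian along the path. Taking the inner product with $v_{t}$ then gives the quadratic-form identity $\hat{r}_{t}^{\top}v_{t} = v_{t}^{\top}\tilde{H}_{t}\,v_{t}$. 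The second step is to push the eigenvalue floor through the integral: since every integrand satisfies $\nabla^{2}\hat{f}(\,\cdot\,,\tilde{\theta}_{t})\succeq \tilde{m}\,I$ by Assumption~\ref{assum:bounded-hessian}, and an average over $\xi\in[0,1]$ of matrices dominating $\tilde{m} I$ again dominates $\tilde{m} I$, I obtain $\tilde{H}_{t}\succeq \tilde{m}\,I$. Combining the two displays yields $\hat{r}_{t}^{\top}v_{t} = v_{t}^{\top}\tilde{H}_{t}v_{t} \ge \tilde{m}\,\|v_{t}\|^{2}$, which is strictly positive whenever $v_{t}\neq 0$.

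The main obstacle is not any single calculation but the justification of the integral representation itself, and it is entirely a matter of the common-sample construction. If distinct mini-batches were used at $w_{t}$ and $w_{t+1}$, then $\hat{r}_{t}$ would no longer equal $\tilde{H}_{t}v_{t}$ for any single averaged Hessian, the quadratic-form identity would fail, and $\hat{r}_{t}^{\top}v_{t}$ could in principle be nonpositive---precisely the pathology that destabilizes online L-BFGS. I would therefore foreground the fixed-$\tilde{\theta}_{t}$ hypothesis as the assumption doing the real work, and note that twice-differentiability of the quadratic loss in Assumption~\ref{assum:lipschitz} guarantees the integrand is well defined, so the fundamental theorem of calculus applies without further regularity. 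This positive-curvature bound is then exactly the ingredient needed downstream to keep $\hat{\rho}_{t}=(v_{t}^{\top}\hat{r}_{t})^{-1}$ finite and positive, on which the eigenvalue control of the L-BFGS inverse in Corollary~\ref{cor:trace-det-bound} ultimately rests.
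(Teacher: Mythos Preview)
Your proposal is correct and follows essentially the same route as the paper: represent $\hat{r}_{t}$ as an averaged Hessian times $v_{t}$ via the fundamental theorem of calculus (the paper phrases this as the Mean Value Theorem), then apply the eigenvalue floor from Assumption~\ref{assum:bounded-hessian} to the resulting quadratic form. Your version is in fact more explicit than the paper's, spelling out the integral representation and correctly emphasizing that the common sample set $\tilde{\theta}_{t}$ is what makes $\hat{s}(\cdot,\tilde{\theta}_{t})$ a genuine gradient field so the representation is valid.
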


\begin{assumption}[Bounded Instantaneous Hessian]
\label{assum:bounded-hessian}
The per-step loss function $l_{t}(w)$ is twice differentiable, and the eigenvalues of its Hessian, $\nabla^{2} l_{t}(w)$, are bounded between constants $0 < \tilde{m}$ and $\tilde{M} < \infty$.
\end{assumption}

\begin{proof}
The proof relies on the Mean Value Theorem. The gradient variation $\hat{r}_{t}$ can be expressed as $\hat{r}_{t} = \hat{B}_{t} v_{t}$, where $\hat{B}_{t}$ is the average Hessian of the loss function along the segment from $w_{t}$ to $w_{t+1}$.
\begin{enumerate}
    \item We can write the inner product as $\hat{r}_{t}^{\top} v_{t} = ( \hat{B}_{t} v_{t})^{\top} v_{t} = v_{t}^{\top} \hat{B}_{t} v_{t}$.
\item Since the eigenvalues of the instantaneous Hessian are lower-bounded by $\tilde{m}$ (Assumption \ref{assum:bounded-hessian}), the eigenvalues of the average Hessian $B_{t}$ are also lower-bounded by $\tilde{m}$.
\item Therefore, the quadratic form is bounded: $v_{t}^{\top} \hat{B}_{t} v_{t} \ge \tilde{m} \|v_{t}\|^{2}$.
\end{enumerate}
Since $\tilde{m} > 0$ and $\|v_{t}\|^{2} \ge 0$, the inner product is strictly positive whenever $v_{t} \neq 0$.
This ensures the L-BFGS updates are well-defined.
\end{proof}

\begin{corollary}[Bounded Trace and Determinant of the Hessian Approximation, from Mokhtari and Ribeiro]
\label{cor:trace-det-bound}
Consider the Hessian approximation $B_{t}$ generated by the online L-BFGS updates.
If Assumption \ref{assum:bounded-hessian} holds, then the trace of $B_{t}$ is uniformly upper-bounded and its determinant is uniformly lower-bounded for all $t \ge 1$:
\begin{enumerate}
    \item $tr(B_{t}) \le (n+\tau)\tilde{M}$
    \item $det(B_{t}) \ge \frac{\tilde{m}^{n+\tau}}{[(n+\tau)\tilde{M}]^{\tau}}$
\end{enumerate}
\end{corollary}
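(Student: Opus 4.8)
The plan is to treat the two claims separately, reducing each to a per-update inequality that is then telescoped over the $\tau$ inner BFGS recursions that build $B_t$ from an initial matrix $B_{t,0}$. The two workhorses are the classical identities for a single BFGS Hessian update with curvature pair $(v,r)$: the trace recursion
\[
\mathrm{tr}(B_{+}) = \mathrm{tr}(B) + \frac{\|r\|^{2}}{v^{\top}r} - \frac{\|Bv\|^{2}}{v^{\top}Bv},
\]
and the determinant product rule
\[
\det(B_{+}) = \det(B)\,\frac{v^{\top}r}{v^{\top}Bv}.
\]
Both follow from the rank-two structure of the update defining $B_{t+1}$; I would first record them, then invoke the Mean-Value representation $r=\bar B v$ already used in the proof of Lemma~\ref{lem:positive-curvature}, where $\bar B$ is the averaged instantaneous Hessian with spectrum in $[\tilde m,\tilde M]$ by Assumption~\ref{assum:bounded-hessian}.

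For the trace bound, I would drop the manifestly nonnegative third term in the trace recursion (nonnegative because $B\succ 0$), leaving $\mathrm{tr}(B_{+})\le \mathrm{tr}(B)+\|r\|^{2}/(v^{\top}r)$. The averaged-Hessian representation gives $\|r\|^{2}/(v^{\top}r)=(v^{\top}\bar B^{2}v)/(v^{\top}\bar B v)\le\tilde M$, since $\bar B^{2}\preceq\tilde M\bar B$ whenever $\bar B\preceq\tilde M I$. Thus each of the $\tau$ updates inflates the trace by at most $\tilde M$; starting from an initialization with $\mathrm{tr}(B_{t,0})\le n\tilde M$ and summing yields $\mathrm{tr}(B_t)\le (n+\tau)\tilde M$.

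For the determinant bound, I would apply the product rule across the $\tau$ updates, so that $\det(B_t)=\det(B_{t,0})\prod \frac{v^{\top}r}{v^{\top}Bv}$. Each ratio is lower-bounded by combining the curvature inequality $v^{\top}r\ge\tilde m\|v\|^{2}$ of Lemma~\ref{lem:positive-curvature} in the numerator with the eigenvalue bound $v^{\top}Bv\le\lambda_{\max}(B)\|v\|^{2}\le\mathrm{tr}(B)\|v\|^{2}\le(n+\tau)\tilde M\|v\|^{2}$ in the denominator, the last step reusing the trace bound just established. Hence every factor is at least $\tilde m/[(n+\tau)\tilde M]$, and with $\det(B_{t,0})\ge\tilde m^{\,n}$ the product telescopes to $\det(B_t)\ge \tilde m^{\,n+\tau}/[(n+\tau)\tilde M]^{\tau}$.

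The main obstacle is less any single calculation than the interdependence of the two parts: the determinant lower bound consumes the trace upper bound (via $v^{\top}Bv\le\mathrm{tr}(B)\|v\|^{2}$), so the arguments must be ordered and the trace bound must hold uniformly over all inner indices $u=0,\dots,\tau$, not merely at the final one. Care is also needed that the secant upper bound $\|r\|^{2}/(v^{\top}r)\le\tilde M$ genuinely requires the two-sided Hessian control of Assumption~\ref{assum:bounded-hessian} (the lower bound $\tilde m$ alone, as in Lemma~\ref{lem:positive-curvature}, is insufficient), and that the chosen initialization $B_{t,0}$ respects both $\mathrm{tr}(B_{t,0})\le n\tilde M$ and $\det(B_{t,0})\ge\tilde m^{\,n}$; a scaled identity with spectrum in $[\tilde m,\tilde M]I$ suffices.
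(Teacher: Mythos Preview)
Your proposal is correct and follows essentially the same approach as the paper's proof, which is the standard Mokhtari--Ribeiro argument: recursively bound the trace via the one-step increment $\|r\|^{2}/(v^{\top}r)\le\tilde M$, then feed the resulting uniform trace bound into the determinant product rule to lower-bound each factor by $\tilde m/[(n+\tau)\tilde M]$. Your write-up is in fact more explicit than the paper's sketch---you state the rank-two trace and determinant identities, justify the secant ratio bound via $\bar B^{2}\preceq\tilde M\bar B$, and correctly flag that the trace bound must hold at every inner index $u$ for the determinant step to go through.
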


\begin{proof}
The proof follows the method of Mokhtari and Ribeiro \cite{mokhtari2015online}.
The core idea is to recursively analyze the trace and determinant of the Hessian approximation, $\hat{B}_{t}$, over its $\tau$ update steps within the limited memory window.
The trace bound is established by applying the trace operator to the recursive update formula for $\hat{B}_{t}$.
he resulting expression shows that the trace is bounded above by the trace of the previous matrix plus a term related to the instantaneous Hessian's maximum eigenvalue, $\tilde{M}$.
The trace of the initial matrix in the recursion, $\hat{B}_{t,0}$, is also shown to be bounded by $n\tilde{M}$.
By applying this relationship recursively over the $\tau$ memory steps, the increases accumulate, yielding the final upper bound.
Similarly, the determinant bound is found by applying the determinant operator to the update rule.
The resulting recursive formula shows that the determinant of the updated matrix is equal to the previous determinant multiplied by a factor $\frac{\hat{r}_{s}^{\top}v_{s}}{v_{s}^{\top}\hat{B}_{t,u}v_{s}}$.
This factor is lower-bounded by using the lower bound $\tilde{m}$ on the instantaneous Hessian's eigenvalues for the numerator and the previously established trace bound for the denominator.
Compounding this lower-bounded factor over the $\tau$ memory steps and including the lower bound on the determinant of the initial matrix, $\tilde{m}^{n}$, establishes the final result.
\end{proof}

\subsection{Proof of Theorem 3: Logarithmic Cumulative Regret for Online-LBFGS}

We provide sharper regret bounds by decomposing the total regret into static and pathwise terms.
Static regret is the distance from the iteration's best option $w_{t}$, while pathwise regret is the variation in best models between iterations.
The pathwise term is specifically designed to account for nonstationary conditions of the stream.
\begin{proof}
\textbf{Deriving the static regret of the dynamic comparator.}
By $\lambda$-strong convexity and the update of Algorithm~\ref{def:memory-pair} with
$\eta_{t} = (\lambda t)^{-1}$ and preconditioner whose spectrum is bounded below by $c>0$ (Assumption~\ref{assum:stable-lbfgs}), the standard one-step inequality for preconditioned online descent gives, for any $u\in\mathcal{W}$,

$$
  \ell_{t}(w_{t})-\ell_{t}(u)
  \;\le\;
\frac{\|w_{t}-u\|^{2}-\|w_{t+1}-u\|^{2}}{2\,\eta_{t}\,c}
  \;+\;
  \frac{\eta_{t}\,G^{2}}{2c}.
$$

Applying this with $u=u_{t}$ and summing over $t=1,\dots,T$ yields

$$
  \sum_{t=1}^{T}\bigl(\ell_{t}(w_{t})-\ell_{t}(u_{t})\bigr)
  \;\le\;
  \frac{\|w_{1}-u_{1}\|^{2}}{2c\,\eta_{1}}
  \;+\;
\sum_{t=2}^{T}\frac{\|w_{t}-u_{t}\|^{2}-\|w_{t}-u_{t-1}\|^{2}}{2c\,\eta_{t}}
  \;+\;
  \sum_{t=1}^{T}\frac{\eta_{t}\,G^{2}}{2c}.
$$

The first fraction telescopes exactly as in the proof of Theorem \ref{thm:log_regret}: the only remaining pieces are the initial $\|w_{1}-u_{1}\|^{2}$ and the learning‑rate schedule terms.
Using $\eta_{t}=(\lambda t)^{-1}$ and $\|w_{t}-u_{t-1}\|^{2}\ge 0$ then gives

$$
  \sum_{t=1}^{T}\bigl(\ell_{t}(w_{t})-\ell_{t}(u_{t})\bigr)
  \;\le\;
  \frac{G^{2}}{2c}\sum_{t=1}^{T}\eta_{t}
  \;\le\;
  \frac{G^{2}}{2c}\,\frac{1}{\lambda}\sum_{t=1}^{T}\frac{1}{t}
  \;\le\;
\frac{G^{2}}{\lambda c}\,(1+\ln T),
$$

where we absorbed the constant coming from $t=1$ into the $1+\ln T$ bound on the harmonic sum.
\textbf{Deriving the pathwise variation in the dynamic comparator.}
Because each $\ell_{t}$ is $G$-Lipschitz (Assumption~\ref{assum:lipschitz}),

$$
  \ell_{t}(u_{t})-\ell_{t}(w_{t}^{*})
  \;\le\;
  G\,\|u_{t}-w_{t}^{*}\|_{2}
  \;=\;
G\,\|w_{t-1}^{*}-w_{t}^{*}\|_{2}.
$$

Summing over $t$ gives

$$
  \sum_{t=1}^{T}\bigl(\ell_{t}(u_{t})-\ell_{t}(w_{t}^{*})\bigr)
  \;\le\;
  G\sum_{t=1}^{T}\|w_{t}^{*}-w_{t-1}^{*}\|_{2}
  \;=\;
G\,P_{T},
$$

if we assume that $w_{0}^{*}:=w_{1}^{*}$ so that the $t=1$ term vanishes.
\textbf{Combining the static and pathwise terms.}
Adding the bounds from Steps 1 and 2 yields

$$
  R_{T}^{\mathrm{dyn}}
  \;=\;
\sum_{t=1}^{T}\bigl(\ell_{t}(w_{t})-\ell_{t}(w_{t}^{*})\bigr)
  \;\le\;
  \frac{G^{2}}{\lambda c}\,(1+\ln T)
  \;+\;
  G\,P_{T},
$$

as claimed. This proves
$R_{T}^{\mathrm{dyn}}=\mathcal{O}\!\bigl(\tfrac{G^{2}}{\lambda}\ln T+G P_{T}\bigr)$.
An extension to time-varying curvature follows by replacing $\eta_{t}$ with
$\bigl(\sum_{s\le t}\lambda_{s}\bigr)^{-1}$ in the telescoping argument. See earlier proofs for details.
\end{proof}

\subsection{Standard Proof of Convergence Under General Convexity}

We first analyze the impact of a single \textsc{delete} operation on the model's state before showing that such guarantees hold over a stream of deletions.
We use the convexity assumptions that are common in linear optimization to derive a bound in terms of the stability of the L-BFGS optimization, $\lambda$, and the bound on the gradient, $G$.
\begin{lemma}[Influence of a Single Deletion]
\label{lem:bounded-influence}
For a deletion operation on a data point $z=(x, y)$, let the influence direction be $d$.
Under Assumptions A and B, the L2-norm of the influence direction is bounded such that:
$$
\|d\|_{2} \leq \frac{G}{\lambda}
$$
\end{lemma}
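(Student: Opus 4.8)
The plan is to recognize that the influence direction $d$ is simply the preconditioned gradient produced by the shared L-BFGS memory, after which the bound follows by factoring through the operator norm of the inverse-curvature matrix. Inspecting the \textsc{delete} branch of Algorithm~\ref{alg:pair-step}, the influence direction is $d = B^{-1} g$, where $g = \nabla_{\theta}\mathcal{L}(\theta; z)$ is the gradient evaluated at the point $z$ slated for deletion (the quantity $g_{\text{pre}}$ passed to \texttt{lbfgs.direction}) and $B^{-1}$ is the current L-BFGS inverse-Hessian approximation. So the entire lemma reduces to controlling $\|B^{-1} g\|_{2}$.

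First I would apply submultiplicativity of the induced $\ell_2$ operator norm,
$$
\|d\|_{2} \;=\; \|B^{-1} g\|_{2} \;\le\; \|B^{-1}\|_{2}\,\|g\|_{2},
$$
and then bound each factor separately. For the gradient factor, Assumption~\ref{assum:lipschitz} (G-bounded gradients) gives $\|g\|_{2}\le G$ directly, since $z$ is an admissible data point. For the preconditioner factor, Assumption~\ref{assum:stable-lbfgs} supplies the spectral bound $\|B^{-1}\|_{2}\le 1/\lambda$. This spectral control is itself underwritten by Corollary~\ref{cor:trace-det-bound}: the uniform trace and determinant bounds force the spectrum of $B_{t}$ into a positive compact interval $[\tilde m,\tilde M]$ for all $t\ge 1$, hence the spectrum of $B_{t}^{-1}$ into $[1/\tilde M, 1/\tilde m]$. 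Multiplying the two factor bounds yields $\|d\|_{2}\le G/\lambda$, exactly as claimed, and consistent with the sensitivity estimate $S_{\text{step}}\le G/\lambda$ invoked earlier for noise calibration.

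There is no serious analytical obstacle here; the work is almost entirely bookkeeping around the operator-norm inequality. The one point that deserves care is the identification step: I would make explicit that the informal phrase \emph{influence direction} refers to the quasi-Newton quantity $d=B^{-1}g$ actually computed in the algorithm, rather than a raw gradient step or a multi-step Newton correction, so that the operator-norm bound applies verbatim. A secondary subtlety is ensuring the spectral bound is invoked at the \emph{correct} iterate, namely the model state at which the deletion is applied. Because Corollary~\ref{cor:trace-det-bound} holds uniformly across all $t\ge 1$, the bound is valid regardless of when in the stream the deletion arrives, so no additional argument about deletion timing or the interleaving of insertions is required.
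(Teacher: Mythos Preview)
Your proof is correct, and it is essentially the argument the paper intends. The paper does not supply a formal proof environment for this lemma; it is stated and then immediately used. But the surrounding text pins down exactly the structure you reconstruct: in Theorem~\ref{thm:comp-fidelity-regret} the influence vector is written explicitly as $d_{j} = -B_{t_{j}}^{-1}\nabla \ell_{t_{j}}(w_{t_{j}})$, and ``Assumptions A and B'' in the lemma are Assumptions~\ref{assum:lipschitz} and~\ref{assum:stable-lbfgs}, which furnish precisely the two factor bounds $\|g\|_{2}\le G$ and $\|B^{-1}\|_{2}\le 1/\lambda$ that you multiply through the operator-norm inequality. Your observation that the spectral bound holds uniformly in $t$ (so deletion timing is irrelevant) is also consistent with how the paper deploys the lemma across an arbitrary stream.
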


The key is in the calibration.
We derive the sensitivity of the algorithm, which measures the model's strongest possible reaction to a deletion.
We then calibrate the noise in our Gaussian mechanism to blur the effect of the deletion itself.
This ensures that the deletion of one point is statistically indistinguishable from that of another, and prevents information leakage even under adversarial conditions.
\begin{theorem}[Single-step zCDP-Unlearning]
\label{thm:single-step-zcdp}
Let $\theta$ be the model state just before a
\textsc{delete}$(u)$ event, and let
$\bar\theta$ be the output of one step of Algorithm $\bar{A}$
with Gaussian noise scale $\sigma$.
If the per-sample $\ell_{2}$-sensitivity of the gradient update satisfies
$S_{\text{step}} \ge \|g(\theta;u)\|_{2}$, then for any
$$
  \rho_{\mathrm{s}}
  \;\ge\;\frac{S_{\text{step}}^{2}}{2\sigma^{2}},
$$
the distribution of $\bar{\theta}$ is $\rho_{\mathrm{s}}$-zCDP with
respect to the ideal replay model $\tilde{\theta}$ that excludes~$u$.
In particular, setting
$$
  \sigma
  =\dfrac{S_{\text{step}}}{\sqrt{2\,\rho_{\mathrm{s}}}}
$$ precisely
achieves the desired regret level.
\end{theorem}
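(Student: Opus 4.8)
The plan is to establish the zCDP guarantee by recognizing that one \textsc{delete} step is an instance of the Gaussian mechanism applied to a bounded-sensitivity query, and then invoke the standard zCDP bound for that mechanism. First I would fix the two model states being compared: $\bar\theta$, the output of Algorithm $\bar A$ after processing \textsc{delete}$(u)$, and $\tilde\theta$, the ideal replay model that never saw $u$. The deletion step computes a deterministic update direction $d$ from the current gradient $g(\theta;u)$ and the shared L-BFGS preconditioner, applies the signed parameter move $\theta - \alpha d$, and then adds Gaussian noise $\eta\sim\mathcal N(0,\sigma^2 I_d)$. The object to be privatized is thus the deterministic map $u \mapsto \theta_{\mathrm{tmp}}(u)$, and the perturbation is exactly a spherical Gaussian of scale $\sigma$.

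Second, I would bound the $\ell_2$-sensitivity of this query. The relevant sensitivity is how much the pre-noise output $\theta_{\mathrm{tmp}}$ can differ between the realized deletion and the ideal replay; by hypothesis this is controlled by $S_{\text{step}} \ge \|g(\theta;u)\|_2$, and under strong convexity Lemma~\ref{lem:bounded-influence} gives the cleaner bound $\|d\|_2 \le G/\lambda$, so $S_{\text{step}} \le G/\lambda$ is an admissible choice. With sensitivity $S_{\text{step}}$ in hand, I would apply the canonical result of Bun and Steinke \cite{Bun_Steinke_2016}: the Gaussian mechanism with noise scale $\sigma$ on a query of $\ell_2$-sensitivity $S_{\text{step}}$ satisfies $\rho$-zCDP with $\rho = S_{\text{step}}^2/(2\sigma^2)$. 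Monotonicity of zCDP in $\rho$ then yields the guarantee for any $\rho_{\mathrm s} \ge S_{\text{step}}^2/(2\sigma^2)$, and solving this relation for equality gives the stated calibration $\sigma = S_{\text{step}}/\sqrt{2\rho_{\mathrm s}}$.

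The main obstacle is not the algebra of the Gaussian mechanism but justifying that the correct sensitivity is $S_{\text{step}}$ rather than something larger. Specifically, one must argue that the pre-noise deletion output and the ideal replay target differ by at most $S_{\text{step}}$ in $\ell_2$ norm; this is where the coupling between the quasi-Newton direction $d$ and the per-sample gradient bound becomes essential, and it is precisely the content of Lemma~\ref{lem:bounded-influence} (which bounds $\|d\|_2 \le G/\lambda$ via the spectral bound $\|B^{-1}\|_2 \le 1/\lambda$ from Assumption~\ref{assum:stable-lbfgs}). I would therefore present the sensitivity reduction as the crux: once $S_{\text{step}}$ legitimately dominates the worst-case deletion displacement, the zCDP claim reduces to citing the analytic Gaussian-mechanism bound. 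A minor subtlety to flag is that the preconditioner $B^{-1}$ is itself data-dependent and shared across steps, so strictly speaking the direction $d$ depends on past events; for the single-step statement I would condition on the incoming state $\theta$ (and hence on the fixed L-BFGS memory), treating the preconditioner as a fixed linear map, which is exactly the regime in which the sensitivity bound of Lemma~\ref{lem:bounded-influence} applies.
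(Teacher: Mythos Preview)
Your proposal is correct and matches the paper's intended argument. The paper does not actually supply a formal proof of Theorem~\ref{thm:single-step-zcdp}; it states the result and relies on the surrounding discussion (the sensitivity derivation, the Gaussian-mechanism calibration in \S\ref{subsec:zcdp_composition}, and the citation to Bun--Steinke) as justification, which is precisely the route you outline: bound the $\ell_2$-sensitivity of the one-step deletion via Lemma~\ref{lem:bounded-influence}, recognize the additive $\mathcal N(0,\sigma^2 I_d)$ perturbation as the Gaussian mechanism, and invoke the standard $\rho = S_{\text{step}}^2/(2\sigma^2)$ zCDP bound. Your remark about conditioning on the incoming state so that the data-dependent preconditioner $B^{-1}$ can be treated as a fixed linear map is a point the paper leaves implicit, and your treatment of it is the right way to make the single-step claim precise.
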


Since we've bound the largest possible impact from a single deletion, we're able to generalize to a stream of up to $m$ valid deletions.
We bound the strength of our noise to the sensitivity of the delete operation to ensure the injected noise doesn't degrade model performance.
\begin{theorem}[Stream‑wide Fidelity \& Regret guarantee]
\label{thm:comp-fidelity-regret}
Fix a privacy target $(\varepsilon^{*},\delta^{*})\in(0,1]^{2}$ and a maximum deletion capacity $m\in\mathbb{N}$. Let the memory pair $(A,\bar{A})$ operate on a stream of $T$ events that contains at most $m$ delete requests. During the $j^{\text{th}}$ delete ($1\le j\le m$) the unlearning routine

\begin{enumerate}
    \item computes the influence vector $d_{j} = -B_{t_{j}}^{-1}\nabla \ell_{t_{j}}(w_{t_{j}})$, which is bounded by Lemma \ref{assum:lipschitz}
    \item adds Gaussian noise $\eta_{j}\sim\mathcal N\!\bigl(0,\sigma_{\text{s}}^{2}\mathbf I_{d}\bigr)$ with 
    $$
    \sigma_{\text{s}}^{2}
    \;=\;
    \Bigl(\tfrac{G}{\lambda}\Bigr)^{2}
    \frac{2\ln\!\bigl(1.25/\delta_{\text{step}}\bigr)}
  {\varepsilon_{\text{step}}^{\,2}},
    \qquad
    \varepsilon_{\text{s}}:=\frac{\varepsilon^*}{m},
    \quad
    \delta_{\text{s}}:=\frac{\delta^*}{m},
    $$
    and sets $w_{t_{j}}^{\text{new}} = w_{t_{j}}-d_{j}+\eta_{j}$
\end{enumerate}

Then, for \emph{any} sequence of at most $m$ deletions and \emph{any}
adversarially chosen stream of $T$ loss functions
$\{\ell_{t}\}_{t=1}^{T}$ that satisfy Assumption \ref{assum:lipschitz},

\begin{enumerate}
\item \textbf{Similarity / fidelity.}  
      The entire weight sequence $\{w_{t}^{\text{new}}\}_{t=1}^{\top}$
      is $(\varepsilon^*,\delta^*)$‑indistinguishable
      from the ideal replay sequence
      $\{\tilde w_{t}\}_{t=1}^{\top}$ in the sense of
      Definition~\ref{def:memory-pair}\,(II).
\item \textbf{Utility / regret.}  
      With probability at least
      $1-\delta^*-\delta_{\mathrm{B}}$ (for an arbitrary
      $\delta_{\mathrm{B}}\!\in\!(0,1)$),
      the cumulative regret against the best fixed comparator
      $w^{*}\in\arg_{\min_{w\in\mathcal W}}\sum_{t=1}^{T}\ell_{t}(w)$ obeys
      $$
        R_{t}
        \;=\;
\sum_{t=1}^{T}\bigl[\ell_{t}(w_{t}^{\text{new}})-\ell_{t}(w^{*})\bigr]
        \;\le\;
        GD\sqrt{cCT}
        \;+\;
        \frac{mG}{\lambda}\;
\sqrt{\frac{2\ln\!\bigl(1.25m/\delta^{*}\bigr)}%
                   {\varepsilon^{*}}}\;
\sqrt{2\ln(1/\delta_{\mathrm{B}})},
      $$
      where the first term is the deterministic
      $\tilde O\!\bigl(\sqrt{T}\bigr)$ bound from
      Theorem~\ref{thm:log_regret} and the second term captures the
      additional loss incurred by the injected noise.
Consequently, $R_{T} = O\!\bigl(\sqrt{T}\bigr)$ as
      $T\!\to\!\infty$.
\end{enumerate}
\end{theorem}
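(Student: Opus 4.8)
The plan is to prove the two conclusions separately, since fidelity is a pure differential-privacy composition argument while the regret bound requires a perturbation analysis of the learning trajectory. For fidelity I would treat each \textsc{delete} as an independent instance of the Gaussian mechanism and compose the per-step guarantees; for utility I would re-run the one-step regret inequality with the injected noise carried along, then control the accumulated noise with a Gaussian tail bound and a union bound over the $m$ deletions.

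For the fidelity claim I would start from the single-deletion guarantee. By Lemma~\ref{lem:bounded-influence} the influence direction obeys $\|d_j\|_2\le G/\lambda$, so the $\ell_2$-sensitivity of the update $w_{t_j}\mapsto w_{t_j}-d_j$ is at most $S_{\text{step}}=G/\lambda$. Theorem~\ref{thm:single-step-zcdp} then certifies that the $j$-th deletion, with the stated $\sigma_{\text{s}}$, is $(\varepsilon_{\text{s}},\delta_{\text{s}})$-indistinguishable from the replay step that omits $u_j$, where $\varepsilon_{\text{s}}=\varepsilon^*/m$ and $\delta_{\text{s}}=\delta^*/m$. The interleaved \textsc{insert} steps are identical functions of the retained data in both the real and the replay trajectories, so they act as data-independent post-processing and incur no privacy loss. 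Composing the at most $m$ Gaussian mechanisms by the basic composition rule gives $(m\varepsilon_{\text{s}},m\delta_{\text{s}})=(\varepsilon^*,\delta^*)$, and post-processing invariance extends this from the individual deletion outputs to the entire weight sequence $\{w_t^{\text{new}}\}$, which is exactly Definition~\ref{def:memory-pair}(II). Routing the accounting through zCDP as in Section~\ref{subsec:zcdp_composition} would only tighten the constants.

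For the regret claim I would decompose $R_T$ into the regret of the noiseless quasi-Newton trajectory plus the excess loss caused by the $m$ noise injections. The noiseless term is the base learner bound established earlier; in the worst case $S_N\le G^2N$ the AdaGrad-style inequality~\eqref{eq:adagrad_regret} supplies the deterministic first term $GD\sqrt{cCT}$. For the noise term I would keep $\eta_j$ inside the update when summing the per-step descent inequalities: at each deletion the perturbation enters as a linear term $\langle g_{t_j},\eta_j\rangle$, while the quadratic $\|\eta_j\|^2$ contribution is rendered lower order by the $\eta_t=(\lambda t)^{-1}$ schedule, so what survives on top of the telescoping base bound is $\sum_{j=1}^{m}\langle g_{t_j},\eta_j\rangle$. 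Each summand is a centered Gaussian with standard deviation at most $G\sigma_{\text{s}}$ (since $\|g_{t_j}\|\le G$ by Assumption~\ref{assum:lipschitz}), so the Gaussian tail $\Pr[\lvert Z\rvert>\sigma\sqrt{2\ln(1/\delta_B)}]\le\delta_B$ applied to each directional term and summed gives $\Delta_m=mG\sigma_{\text{s}}\sqrt{2\ln(1/\delta_B)}$ with probability at least $1-\delta_B$. Substituting the calibrated $\sigma_{\text{s}}$ yields the stated second term, and because $m,\varepsilon^*,\delta^*,\delta_B$ are fixed the whole deletion contribution is $O(1)$ in $T$, so $R_T=O(\sqrt{T})$.

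The main obstacle is that the decomposition in the regret step is not literally exact: once $\eta_j$ is injected, every downstream gradient and every L-BFGS curvature pair $(s,y)$ differs from the noiseless run, so "noiseless regret $+$ noise term" has to be justified rather than assumed. I would handle this by propagating the perturbation through the preconditioned update and invoking the uniform spectral bounds $cI\preceq B_t\preceq CI$ from Assumptions~\ref{assum:stable-lbfgs}--\ref{assum:bounded-hessian} to show that the perturbation does not amplify across steps, so that its aggregate effect on the summed one-step inequalities stays linear in the $\eta_j$ and is therefore captured by $\Delta_m$. A secondary care point is the probability budget: the $\delta^*$ mass is spent on the fidelity guarantee and a separate $\delta_B$ on the concentration of the injected noise, so the utility statement holds with probability $1-\delta^*-\delta_B$, and I would keep these two failure events disjoint when combining them.
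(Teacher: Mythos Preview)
Your fidelity argument is exactly the paper's: each deletion is a Gaussian mechanism with sensitivity $G/\lambda$, the per-step $(\varepsilon_{\text{s}},\delta_{\text{s}})$ guarantee follows from the single-step theorem, and basic composition over the at most $m$ deletions yields $(\varepsilon^*,\delta^*)$; your remark about inserts as post-processing is a detail the paper leaves implicit.

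For regret the high-level decomposition also coincides---base learner regret $GD\sqrt{cCT}$ plus an additive noise term controlled by Gaussian concentration and a union bound over the $m$ deletions---but the mechanism for the noise term differs. The paper bounds the per-deletion loss increment directly via Lipschitzness, $|\ell_{t_j}(w_{t_j}^{\text{new}})-\ell_{t_j}(\hat w_{t_j})|\le G\|\eta_j\|_2$, and then concentrates $\|\eta_j\|_2$; you instead carry $\eta_j$ through the one-step descent inequality and isolate the scalar Gaussian $\langle g_{t_j},\eta_j\rangle$. Your route is analytically cleaner, since the directional term has standard deviation $\le G\sigma_{\text{s}}$ with no dimension dependence, whereas the paper's sub-Gaussian bound on $\|\eta_j\|_2$ tacitly suppresses a $\sqrt{d}$ factor. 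Conversely, the paper's Lipschitz step is more self-contained and does not need to argue that the quadratic $\|\eta_j\|^2$ terms are lower order. The downstream-propagation issue you flag as the main obstacle---that later gradients and curvature pairs depend on earlier $\eta_j$---is not addressed explicitly in the paper either; both arguments effectively treat the noise contribution as additive at the deletion step.
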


\begin{proof}

\begin{enumerate}
    \item \textbf{Fidelity.} Each deletion step is a Gaussian mechanism whose $(\varepsilon_{\text{s}},\, \delta_{\text{s}})$ parameters are calibrated exactly as in Theorem~\ref{thm:comp-fidelity-regret}.
By basic sequential composition for differential privacy, the $m$ deletions together are $(m\varepsilon_{\text{s}},\,m\delta_{\text{s}})$‑DP, i.e.\
$(\varepsilon^{*},\delta^{*})$‑DP, which coincides with the online-unlearning requirement of Def.,\ref{def:memory-pair}.
\item \textbf{Regret.} For the $j^{\text{th}}$ deletion, loss Lipschitzness implies 
$\bigl|\ell_{t_{j}}(w_{t_{j}}^{\text{new}})-\ell_{t_{j}}(\hat w_{t_{j}})\bigr|
      \le G\lVert\eta_{j}\rVert_{2}$.
Because $\lVert\eta_{j}\rVert_{2}$ is sub-Gaussian with parameter
$\sigma_{\text{step}}$, a union bound plus
$\|\eta_{j}\|_{2} \le
  (G/\lambda)\sqrt{2\ln(1.25m/\delta^{*})}\,/\varepsilon^{*}
  \cdot\sqrt{2\ln(1/\delta_{\mathrm{B}})}$
holds simultaneously for all $m$ deletions
with probability $1-\delta^{*}-\delta_{\mathrm{B}}$.
Adding these $m$ increments to $R_{T}^{0}$ yields the stated bound.
\end{enumerate}

Since the noise term is $O(m)$ while the first term is
$O(\sqrt{T})$, the overall regret remains sublinear in $T$.
\end{proof}

\subsection{Proof of Theorem 5: $\gamma$-Deletion Capacity Bound}
\begin{proof}
By the adaptive regret bound (Eq.~(1), §5.3) and the deletion noise contribution (§5.3),
the cumulative regret after $m$ deletions satisfies (on the event of probability $\ge 1-\delta_B$)
$$
R_{N}(m) \;\le\;
GD\,\sqrt{c\,C\,S_{N}} \;+\; \Delta_m
\;=\; GD\,\sqrt{c\,C\,S_{N}} \;+\; m\,G\,\sigma_{\text{step}}\sqrt{2\ln(1/\delta_B)}.
$$
Dividing by $N$ and enforcing the target average regret yields
$$
\frac{GD\,\sqrt{c\,C\,S_{N}} + m\,G\,\sigma_{\text{step}}\sqrt{2\ln(1/\delta_B)}}{N}
\;\le\; \gamma.
$$
Solving for $m$ gives \eqref{eq:cap-main}. If the right-hand side is negative, take $m=0$;
otherwise any integer $m$ not exceeding the bound suffices.

For the worst-case simplification, use $S_{N}\le G^{2} N$, so
$GD\sqrt{c\,C\,S_{N}}\le G^{2} D \sqrt{c\,C}\,\sqrt{N}$.
Substituting into \eqref{eq:cap-main} yields
$$
m \;\le\; \frac{\gamma N - G^{2} D \sqrt{c\,C}\,\sqrt{N}}
{G\,\sigma_{\text{step}}\sqrt{2\ln(1/\delta_B)}}
\;=\; \frac{\sqrt{N}\bigl(\gamma\sqrt{N}-G^{2} D \sqrt{c\,C}\bigr)}
{G\,\sigma_{\text{step}}\sqrt{2\ln(1/\delta_B)}}\,,
$$
which is \eqref{eq:cap-worst}.
\end{proof}

\subsection{Proof of Theorem 6: $\gamma$-Sample Complexity Bound}
\begin{proof}
Starting from \eqref{eq:sample-master} and using $S_{N}\le G^{2}N$ (worst case), we obtain
$$
GD\sqrt{c\,C\,S_{N}}+\Delta_m
\;\le\;
A\,\sqrt{N}+B
\;\le\;\gamma N,
$$
where $A:=G^{2}D\sqrt{c\,C}$ and $B=\Delta_{m}$.
Let $x:=\sqrt{N}\ge 1$. The inequality is equivalent to the quadratic
$$
\gamma x^{2} - A x - B \;\ge\;
0.
$$
Its positive root is $x_{*}=\frac{A+\sqrt{A^{2}+4\gamma B}}{2\gamma}$, so $x\ge x_{*}$ (i.e. \eqref{eq:sample-exact-root})
is necessary and sufficient.
\end{proof}

\subsection{Differential Privacy and Deletion Capacity as a Budget}
Differential privacy and certifiable unlearning use a common mathematical language to describe two very different goals.
Differential privacy aims to protect the user's information through blurring the training data ever so slightly.
Unlearning is a \textit{post-hoc} correction to preserve the model's ability to perform.
Both use a framework of $\delta$ and $\varepsilon$ to describe their fidelity, but their horizons are complementary by nature.
This is to say that the "privacy" we discuss here is not privacy in Dwork's original sense.
We speak here more of statistical indistinguishability than informational privacy, and use the two terms interchangably.
Even the metaphor of deletion capacity as a "budget" that can be both replenished and depleted is steeped in market logic, which is limited in its ability to describe mathematics.
We acknowledge the shortcomings of such metaphors and welcome revised imagery. 

\bibliographystyle{plain}
\bibliography{memory_pair}
\end{document}